\documentclass[10pt]{article} 

\usepackage[preprint]{rlj} 

\usepackage{amssymb}            
\usepackage{mathtools}          
\usepackage{mathrsfs}           
\usepackage{graphicx}           
\usepackage{subcaption}         
\usepackage[space]{grffile}     
\usepackage{url}                
\usepackage[linesnumbered,algoruled,longend,vlined]{algorithm2e}
\usepackage{booktabs}
\usepackage{multirow}
\usepackage{amsthm}

\newtheorem{lemma}{Lemma}
\newtheorem{proposition}{Proposition}

\title{Do Not Imitate, Reinforce: Iterative Classification via Belief Refinement}

\setrunningtitle{Iterative Classification via Belief Refinement}

\author{
    Mahdi Kallel\textsuperscript{1}, 
    Johannes Tölle\textsuperscript{1},
    Ahmed Hendawy\textsuperscript{2},
    Carlo D'Eramo\textsuperscript{1}
}
\emails{}
\affiliations{
    $^{1}$\textbf{Center for Artificial Intelligence and Data Science, University of Würzburg}\\
    $^{2}$\textbf{Department of Computer Science, Technical University of Darmstadt}
}

\contribution{
    Viewing supervised classification as single-step behavior cloning, we propose Reinforced Iterative Classification (RIC), recasting it as iterative belief refinement via a thought MDP formulation. A recurrent agent refines a continuous predictive distribution over classes, receiving reward for stepwise improvement in log-score.
    }
    {
    Adaptive computation methods such as ACT \citep{graves2016adaptive} and PonderNet \citep{banino2021pondernet} learn when to halt under supervised objectives. Recurrent visual attention \citep{mnih2014recurrent} uses RL to select where to look but trains the classifier with cross-entropy. Neither approach iteratively refines a belief over classes via RL.
    }

\contribution{
    We analyze RIC's optimization dynamics. The discounted objective is equivalent to a geometrically weighted mixture of per-step log-scores, yielding an anytime classifier. Despite the sequential formulation, the optimal policy still recovers the true class probabilities under realizability (Proposition~\ref{prop:target}). The geometric mixture admits a finite logit-scale maximizer that standard cross-entropy lacks (Proposition~\ref{prop:finite_scale}), predicting improved calibration.
    }
    {
   Prior analyses show standard cross-entropy pushes logit norms to infinity on separable data, leading to overconfidence \citep{soudry2018implicit}. Our analysis shows that RIC's geometric mixture objective counteracts this pathology by anchoring the logit scale at a finite value.
    }

\contribution{
    RIC achieves competitive accuracy compared to standard supervised approaches and supervised adaptive computation models, while improving calibration on CIFAR-10, SVHN, and ImageWoof. The learned value function provides a halting signal that concentrates computation on resolvable inputs and terminates early on intractable ones, without explicit design.
    }
    {
    ACT \citep{graves2016adaptive} and PonderNet \citep{banino2021pondernet} are the primary adaptive computation baselines for classification. Both use supervised objectives with explicit halting mechanisms, with evaluation centered on computational efficiency.
    }

\keywords{Reinforcement Learning, Adaptive Computation, Classification} 

\summary{
Standard supervised classification trains models to strictly mimic the exact labels of a perfect oracle. 
Since this typically happens in a single forward pass, models are locked into a fixed compute budget whether an input is simple or highly complex. 
Moreover, this rigid training objective forces the model to express absolute certainty on its training data, which often carries over into evaluation, leading to overconfident predictions.
To address these limitations, we propose Reinforced Iterative Classification (RIC), which replaces the imitative objective with Reinforcement Learning (RL).
RIC deploys a recurrent agent that iteratively refines a predictive distribution over classes, receiving reward for stepwise improvements in prediction quality.
Using image classification as a testbed, we show that RIC maintains the accuracy of supervised baselines while achieving better calibration. 
The learned policy naturally manages its own computation by allocating more effort on resolvable inputs while halting when further improvement appears unlikely.
}

\begin{document}

\makeCover  
\maketitle  

\begin{abstract}
Standard supervised classification trains models to strictly mimic the exact labels of a perfect oracle. 
Since this typically happens in a single forward pass, models are locked into a fixed compute budget whether an input is simple or highly complex. 
Moreover, this rigid training objective forces the model to express absolute certainty on its training data, which often carries over into evaluation, leading to overconfident predictions.
To address these limitations, we propose Reinforced Iterative Classification (RIC), which replaces the imitative objective with Reinforcement Learning (RL).
RIC deploys a recurrent agent that iteratively refines a predictive distribution over classes, receiving reward for stepwise improvements in prediction quality.
Using image classification as a testbed, we show that RIC maintains the accuracy of supervised baselines while achieving better calibration. 
The learned policy naturally manages its own computation by allocating more effort on resolvable inputs while halting when further improvement appears unlikely.
\end{abstract}

\section{Introduction}
\label{sec:intro}
Standard supervised classification models are structurally restricted to a single forward pass, applying a fixed compute budget to every input regardless of its complexity. From a sequential decision-making perspective, this can be viewed as a rigid form of imitation learning. Under this paradigm, the model is trained to replicate an oracle label in one step, with no mechanism to reflect on ambiguous features. This imitation bottleneck creates two major limitations. First, it pushes models toward pathological overconfidence on the training data, which later leaks into evaluation, resulting in poorly calibrated predictions \citep{guo2017calibration}. Second, this single-step inference prevents the dynamic allocation of additional compute for ambiguous examples.

To overcome these limitations, we propose Reinforced Iterative Classification (RIC), a framework that replaces the standard imitation objective with a Reinforcement Learning (RL) based sequential formulation. Instead of relying on a single-step predictor, RIC deploys a recurrent agent that maintains and iteratively refines a continuous predictive distribution over classes. By receiving rewards for incremental improvements in prediction quality, the agent learns to update its internal state over multiple steps until converging to a stable belief.



Unlike the standard supervised paradigm, which optimizes cross-entropy in a single step, our RL formulation reshapes the optimization landscape while targeting the same true class probabilities. The discounted objective evaluates a shared classifier at early refinement steps where features are not yet separable, anchoring logit confidence at a finite scale. This prevents the unbounded logit growth that standard cross-entropy exhibits on separable data \citep{soudry2018implicit}, leading to more calibrated predictions.
Unlike post-hoc calibration methods such as temperature scaling \citep{guo2017calibration}, which correct overconfidence after training, RIC's calibration arises structurally from the optimization objective. These approaches are orthogonal and can be combined.

Moreover, our  RL formulation naturally yields a principled mechanism for adaptive computation. Given that the value function explicitly estimates the potential for future refinement, a near-zero value indicates that further computation is unlikely to improve the prediction. This naturally provides a transparent halting signal without requiring any additional parameters. Such a criterion sharply contrasts with supervised alternatives for adaptive computation, which typically rely on carefully-tuned ponder penalties or auxiliary halting networks \citep{graves2016adaptive, banino2021pondernet}.

In summary, our work formalizes these conceptual advantages into three concrete \textit{contributions}:
\textbf{(i.)} We establish the RIC framework, recasting classification from a rigid imitation task into a sequential decision-making process. 
\textbf{(ii.)} We analyze RIC's optimization dynamics. The discounted objective decomposes into a geometrically weighted mixture of per-step log-scores, yielding an anytime classifier. Despite the sequential formulation, the optimal policy still recovers the true class probabilities under realizability (Proposition~\ref{prop:target}). The geometric mixture admits a finite logit-scale maximizer that standard cross-entropy lacks (Proposition~\ref{prop:finite_scale}), predicting improved calibration over single-step training~\citep{soudry2018implicit}.
Finally, \textbf{(iii.)} we show empirically that RIC matches the accuracy of supervised baselines on CIFAR-10, SVHN, and ImageWoof while reducing calibration error \citep{guo2017calibration}. The learned value function provides a halting signal that concentrates computation on resolvable inputs and terminates early on intractable ones, without explicit design.

\section{Preliminaries}
\label{sec:pre}
\textbf{Supervised Classification.} We consider a $K$-class classification problem with input space $\mathcal{X}$, label space $\mathcal{Y}=\{1,\dots,K\}$, and an unknown data distribution $\mathcal{D}$ over $\mathcal{X}\times\mathcal{Y}$.
We write $\Delta^{K-1}=\{p\in\mathbb{R}^K_{\ge 0}:\sum_{j=1}^K p_j=1\}$ for the probability simplex and $q(\cdot | x) \in \Delta^{K-1}$ for the true conditional distribution over labels given input $x$.
Cross-entropy is the negative log-score, $\mathrm{CE}(p,y)=-\log p_y$, where $p_y$ is the probability that $p$ assigns to class $y$.
We measure calibration with expected calibration error (ECE)~\citep{guo2017calibration}. Predictions are grouped into $M$ equal-width confidence bins $B_1,\dots,B_M$, and $\mathrm{ECE}=\sum_{m=1}^{M}\frac{|B_m|}{n}\,|\mathrm{acc}(B_m)-\mathrm{conf}(B_m)|$, the weighted average gap between per-bin accuracy and mean confidence.

\textbf{Markov Decision Processes.} A standard Markov Decision Process (MDP) is a tuple $\mathcal{M}=\langle \mathcal{S},\mathcal{A},P,R,\gamma \rangle$ with state space $\mathcal{S}$, action space $\mathcal{A}$, transition kernel $P$, reward function $R$, and discount factor $\gamma\in(0,1)$.
An agent's behavior is governed by a policy $\pi$ that maps states to distributions over actions.
For a fixed labeled example $(x,y)$, the per-example objective is $J_{x,y}(\theta) := \mathbb{E}_{\pi_\theta}\!\left[\sum_{t=1}^{\infty} \gamma^{t-1} r_t \mid x,y\right]$.
The overall dataset objective is $J(\theta) := \mathbb{E}_{(x,y) \sim \mathcal{D}}[J_{x,y}(\theta)]$.
The value function $V^\pi(s) = \mathbb{E}_\pi[\sum_{u=t}^\infty \gamma^{u-t} r_u \mid s_t=s]$ represents the expected cumulative discounted reward from state $s$ onward.


\section{Reinforced Iterative Classification}
\label{sec:ric}

\subsection{MDP Formulation}
\label{sec:mdp}
The proposed classification agent refines an internal belief over labels rather than interacting with an external environment. This internal focus is close in
  spirit to recent work on \emph{Thought MDPs}~\citep{hanna2025thinking}, but our formulation is developed independently and tailored to supervised classification.
  Each training episode corresponds to a single labeled example $(x,y)\sim\mathcal{D}$.

  \textbf{Action and State.} Each episode begins from $s_1=(x,a_0,\tau_0)$, where $a_0=[1/K,\dots,1/K]$ is the uniform prior over classes and $\tau_0=\mathbf{0}$ is
  the initial latent thought state. For each refinement step $t\ge 1$, the state is $s_t=(x,a_{t-1},\tau_{t-1})$, containing the input, the previous prediction, and
  the previous latent state. The agent then produces an updated prediction $a_t\in\Delta^{K-1}$ according to $a_t\sim\pi_\theta(\cdot\mid s_t)$. The label $y$ is
  used only to compute rewards, so the policy conditions solely on $s_t$.

  \textbf{Transition.} The environment dynamics are deterministic. After the agent produces $a_t$, the latent thought state is updated by the recurrent computation $
  \tau_t=f_\theta(x,\tau_{t-1},a_{t-1})$, yielding the next state $s_{t+1}=(x,a_t,\tau_t)$.

  \textbf{Reward.} Standard classification training minimizes cross-entropy, but providing this loss only at the end of an episode yields a sparse learning signal
  and makes credit assignment across refinement steps more difficult. We therefore assign an incremental reward that measures how much each step improves the log-
  score of the correct class. Let $a_{t,y}$ denote the probability that prediction $a_t$ assigns to the true class $y$. For $t\ge 1$, we define
  \begin{equation}
  r_t = \log a_{t,y} - \log a_{t-1,y}.
  \label{eq:reward}
  \end{equation}

\subsection{Architecture and Algorithms}
\label{sec:architecture}
\label{sec:arch}
\begin{figure}[]
    \centering
    \includegraphics[width=\textwidth]{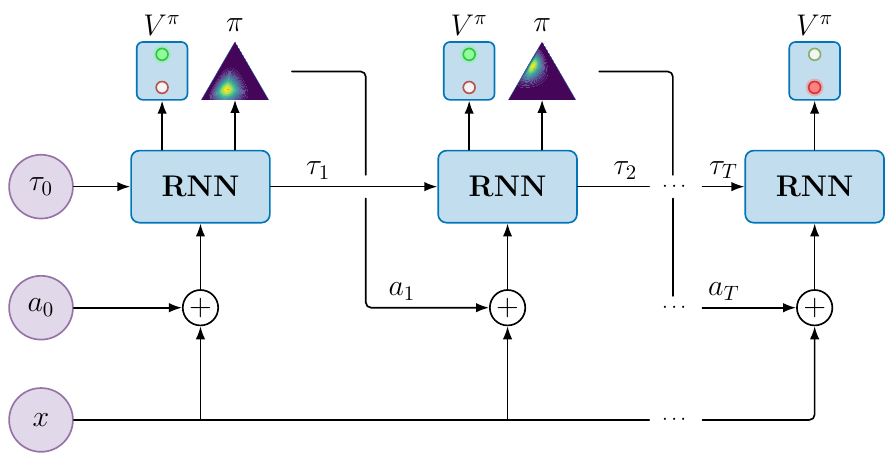}
    \caption{\textbf{Actor-critic architecture for RIC.} A recurrent module maintains the thought state $\tau_t$. 
    The policy head outputs a continuous distribution over class probabilities. 
    The value head predicts expected future improvement. At each step, the recurrent module reads the input and previous action to produce the next thought state, and the two heads produce the next action and value estimate. 
    During inference, refinement stops once the predicted value becomes negative and the current action is returned as the final output.}
    \label{fig:nets}
\end{figure}

We instantiate this MDP with an actor-critic architecture (Figure~\ref{fig:nets}).
A recurrent module receives the input $x$ and the previous action $a_t$ at every step and produces thought state $\tau_{t+1}$.
The policy $\pi_\theta(\cdot \mid s_t)$ is implemented by a policy head that outputs a Dirichlet distribution over $\Delta^{K-1}$.
The value function $V_\phi(s_t) \in \mathbb{R}$ is implemented by a separate value head.
Both heads are reused at every refinement steps and share the same recurrent backbone.

Because rewards measure one-step change in log-score, the value function estimates the expected cumulative improvement still available from a given state. During inference, the agent checks $V_\phi(s_t)$ before taking another refinement step. If $V_\phi(s_t) < 0$, it stops and returns the current prediction. Otherwise, it computes the next action update. This gives an adaptive stopping rule without any auxiliary halting mechanism. The agent also acts deterministically at inference, outputting the Dirichlet mean $\mu_t = \mathbb{E}_\pi[a_t \mid s_t]$ as its class-probability prediction rather than sampling from the policy.

%
During training, each episode is unrolled for a fixed horizon of $T$ steps. Value-based stopping is used only at inference. Training proceeds in rounds. At the start of each round, we freeze the current parameters as a snapshot $(\theta_{\mathrm{old}}, \phi_{\mathrm{old}})$. Over $P$ passes through the dataset, rollouts are collected using the snapshot policy $\pi_{\theta_{\mathrm{old}}}$, and the current parameters $(\theta, \phi)$ are updated via SPO \citep{xie2025simple} with generalized advantage estimation (Algorithm~\ref{alg:recurrent-spo}).

\begin{algorithm}[t]
  \caption{RIC Training}
  \label{alg:recurrent-spo}
  \KwIn{dataset $\mathcal{D}$, horizon $T$, passes per snapshot $P$,
        discount $\gamma$, value coefficient $c_v$}

  Initialize parameters $\theta$, $\phi$ \;
  \For{$n \gets 1, 2, \ldots$}{
    $\theta_{\mathrm{old}} \gets \theta$, \enspace $\phi_{\mathrm{old}} \gets \phi$ \tcp*{snapshot}
    \For{$p \gets 1$ \KwTo $P$}{
      \ForEach{minibatch $\mathcal{B} \subset \mathcal{D}$}{
        Roll out $\pi_{\theta_{\mathrm{old}}}$ for $T$ steps on each $x \in \mathcal{B}$\;
        $\hat{A}, \hat{R} \gets \mathrm{GAE}(\mathcal{B}, V_{\phi_{\mathrm{old}}}, \gamma)$\;
        Update $\theta, \phi$ via $\nabla \hat{\mathbb{E}}_{\mathcal{B}}\!\bigl[L_{\mathrm{SPO}}(\theta, \theta_{\mathrm{old}}, \hat{A}) - c_v \lVert V_\phi - \hat{R} \rVert^2\bigr]$\;
      }
    }
  }
\end{algorithm}

The sequential formulation is only interesting if it changes optimization in a principled way. We therefore show next that discounting yields an anytime log-score objective, preserves the Bayes target under realizability, and reshapes logit-scale incentives relative to standard cross-entropy.

\section{Theoretical Analysis}
\label{sec:analysis}

\subsection{Anytime Prediction via Geometric Stopping}
\label{sec:landscape}

An iterative classifier is only useful if its intermediate predictions are already meaningful, not just its final one. This is the setting of an anytime classifier, which should make accurate predictions no matter when computation stops. In the following, we show that the discounted RIC objective enables this behavior.

A standard identity from MDPs shows that the discounted return is mathematically equivalent to the undiscounted return under a random stopping time with a geometric distribution.

\begin{lemma}[Geometric horizon \citep{puterman1994markov}]
\label{lem:geom_discount}
For any integrable reward sequence $\{r_t\}$ and discount $\gamma \in (0,1)$,
\[
\mathbb{E}\!\left[\sum_{t=1}^{\infty}\gamma^{t-1}r_t\right] = \mathbb{E}_{N\sim\mathrm{Geom}(1-\gamma)}\!\left[\sum_{t=1}^{N}r_t\right],
\]
where the random horizon $N$ is independent of the rewards.
\end{lemma}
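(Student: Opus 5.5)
The plan is to write the truncated sum $\sum_{t=1}^{N} r_t$ as an infinite sum with indicator weights, $\sum_{t=1}^{\infty} r_t\,\mathbf{1}\{N\ge t\}$, take expectations, and use independence of $N$ from the rewards. We use the convention that $N\sim\mathrm{Geom}(1-\gamma)$ is supported on $\{1,2,\dots\}$ with $\Pr(N=n)=(1-\gamma)\gamma^{n-1}$. This gives the tail probability
\[
\Pr(N\ge t)=\sum_{n\ge t}(1-\gamma)\gamma^{n-1}=\gamma^{t-1}.
\]

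The argument proceeds in three steps. First, condition on the reward sequence $\{r_t\}$. Since $N$ is independent of it,
\[
\mathbb{E}\!\left[\sum_{t=1}^{N} r_t \,\middle|\, \{r_t\}\right]=\sum_{t=1}^{\infty} r_t \Pr(N\ge t)=\sum_{t=1}^{\infty}\gamma^{t-1} r_t .
\]
Second, take the outer expectation over the rewards, which yields the left-hand side of the statement. Third, justify the interchange of sum and expectation; this is the only nonroutine point.

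We read ``integrable'' as $\mathbb{E}\bigl[\sum_t \gamma^{t-1}|r_t|\bigr]<\infty$, which holds for instance when $\sup_t \mathbb{E}|r_t|<\infty$. Under that reading, Fubini--Tonelli applies to the product measure of $N$ and the reward sequence. Applying Tonelli to $|r_t|\mathbf{1}\{N\ge t\}$ gives
\[
\mathbb{E}\sum_t |r_t|\,\mathbf{1}\{N\ge t\}=\sum_t \gamma^{t-1}\,\mathbb{E}|r_t|<\infty .
\]
This also shows that the random-horizon sum is integrable. Fubini then licenses swapping sum and expectation for the signed terms.

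We expect the main obstacle to be exactly this integrability bookkeeping, since the identity itself is a one-line tail-sum computation. If a weaker hypothesis were intended, such as each $r_t$ being integrable with the series converging only conditionally, we would instead work with partial sums truncated at a fixed $M$ and pass to the limit by dominated convergence. In RIC this is harmless in any case. The rewards telescope, so $\sum_{t=1}^{N} r_t=\log a_{N,y}-\log a_{0,y}$, and boundedness of $\log a_{t,y}$ away from $-\infty$ (e.g.\ via clipping) makes all terms uniformly bounded.
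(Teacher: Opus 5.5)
Your proof is correct and is essentially the standard interchange-of-summation argument behind the result the paper cites from Puterman without reproving it: write $\sum_{t=1}^{N} r_t=\sum_{t\ge 1} r_t\,\mathbf{1}\{N\ge t\}$, use independence and $\mathbb{P}(N\ge t)=\gamma^{t-1}$, and justify the interchange by Fubini--Tonelli under $\mathbb{E}[\sum_t\gamma^{t-1}|r_t|]<\infty$, which follows from the paper's appendix assumption $\mathbb{E}_\pi[\sum_t\gamma^{t-1}|\log a_{t,y}|]<\infty$ since $|r_t|\le|\log a_{t,y}|+|\log a_{t-1,y}|$. One small correction to your closing aside: in RIC the actions are Dirichlet samples, so $\log a_{t,y}$ is not bounded below pointwise (the clipping acts on the concentration $c$, not on $a_t$), and integrability there comes from uniform Dirichlet log-moment bounds, i.e., from your $\sup_t\mathbb{E}|r_t|<\infty$ route rather than from uniform boundedness.
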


Since our per-step reward is the change in log-score (Eq.~\ref{eq:reward}), the cumulative reward up to step $N$ telescopes to $\sum_{t=1}^{N} r_t = \log a_{N,y} - \log a_{0,y}$. By applying Lemma~\ref{lem:geom_discount}, the expected return for a single example reduces to the expectation at a geometrically drawn terminal step.
\begin{equation}
J_{x,y}(\theta) = \mathbb{E}_{N\sim\mathrm{Geom}(1-\gamma),\,a_{1:N}\sim \pi_\theta}\!\left[\log a_{N,y}\right] - \log a_{0,y}.
\label{eq:telescope}
\end{equation}
The initial prediction $a_0$ is fixed, so $-\log a_{0,y}$ is a constant offset that does not affect the optimizer. 
Under the uniform prior $a_0 = [1/K,\dots,1/K]$, this offset is $\log K$. 
Thus, maximizing the sequential return reduces to maximizing the expected log-score at a random terminal step. 
Because the geometric horizon can stop at any step, the model is rewarded for making accurate predictions throughout refinement rather than only at the end. 
In this sense, RIC is an anytime classifier. 
This resembles deep supervision \citep{lee2015deeply}, but here it arises directly from the discounted objective.

\subsection{The Structure of the Optimal Policy}
\label{sec:target}

Under the RIC formulation, the policy is trained under an RL objective with stochastic actions, so the next question is what this objective is actually pushing the policy toward. We show that, at optimum, it has the same solution as standard cross-entropy training.

During training, the policy explores by sampling stochastic actions on the simplex. After Eq.~\ref{eq:telescope}, the objective reduces to expected log-score at a randomly selected refinement step. Since $\log a_y$ is concave in the action, stochastic spread around the mean prediction incurs a quantifiable penalty.

\begin{lemma}[Variance penalty of stochastic actions]
\label{lem:jensen_gap}
For any continuous policy on the simplex with per-state mean $\mu_y = \mathbb{E}_\pi[a_y \mid s]$, the expected reward is strictly bounded by the reward of the mean
\begin{equation}
\mathbb{E}_\pi[\log a_y \mid s] = \log \mu_y - \mathbb{E}_\pi\!\big[D_F(a_y \| \mu_y) \mid s\big],
\label{eq:jensen_gap}
\end{equation}
where $D_F(x \| y) = x/y - \log(x/y) - 1$ is the Bregman divergence of $F(x) = -\log x$. The penalty $\mathbb{E}_\pi[D_F] \ge 0$ vanishes if and only if $a_y = \mu_y$ almost surely.
\end{lemma}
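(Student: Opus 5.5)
The identity in Eq.~\eqref{eq:jensen_gap} is an exact algebraic decomposition, so I will verify it pointwise and then take expectations; the inequality and equality case follow from properties of $D_F$. Fix the state $s$ and write $a_y$ for the (random) probability the action assigns to the true class, with $\mu_y=\mathbb{E}_\pi[a_y\mid s]$. Because the policy is continuous on the simplex (a Dirichlet with positive parameters in our architecture), $a_y>0$ almost surely and $\mu_y\in(0,1]$, so every logarithm below is finite. I will also assume $\mathbb{E}_\pi[|\log a_y|\mid s]<\infty$, which holds for the Dirichlet and is needed for the left-hand side to be defined.

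\textbf{Step 1 (pointwise identity).} From the definition $D_F(x\|y)=x/y-\log(x/y)-1$ we have, for every realization,
\[
\log a_y = \log \mu_y + \log\frac{a_y}{\mu_y} = \log\mu_y - D_F(a_y\|\mu_y) + \Big(\frac{a_y}{\mu_y}-1\Big).
\]
One can check that $D_F$ is indeed the Bregman divergence of $F(x)=-\log x$: $F(x)-F(y)-F'(y)(x-y) = -\log x+\log y + (x-y)/y$, which matches.

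\textbf{Step 2 (take expectations).} Conditioning on $s$, the last term has expectation $\mathbb{E}_\pi[a_y\mid s]/\mu_y-1=0$ by definition of $\mu_y$. Since $a_y/\mu_y$ is bounded by $1/\mu_y$ and $\log a_y$ is integrable, $D_F(a_y\|\mu_y)$ is integrable as well, so linearity of expectation is justified and yields Eq.~\eqref{eq:jensen_gap} exactly.

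\textbf{Step 3 (sign and equality case).} Let $g(z)=z-\log z-1$ for $z>0$, so $D_F(a_y\|\mu_y)=g(a_y/\mu_y)$. Since $g'(z)=1-1/z$ and $g''(z)=1/z^2>0$, $g$ is strictly convex with unique minimum $g(1)=0$. Hence $D_F\ge 0$ pointwise, which gives $\mathbb{E}_\pi[D_F\mid s]\ge 0$ and therefore $\mathbb{E}_\pi[\log a_y\mid s]\le\log\mu_y$. This is Jensen's inequality for the concave $\log$, now made quantitative. If the expectation vanishes, the nonnegative variable $D_F$ equals zero almost surely, which forces $a_y=\mu_y$ a.s. The converse is immediate.

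The only delicate point is integrability and positivity: the argument breaks down if $a_y$ can hit $0$ with positive probability. That is why I state the standing assumption that the policy has a density on the open simplex with $\mathbb{E}|\log a_y|<\infty$. The phrase ``strictly bounded'' in the statement should then be read as strict inequality exactly when $a_y$ is nondegenerate, which is what Step 3 gives.
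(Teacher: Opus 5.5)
Your proof is correct and takes the same approach as the paper. Both write $\log a_y = \log\mu_y + (a_y-\mu_y)/\mu_y - D_F(a_y\|\mu_y)$, drop the linear term in expectation because $\mathbb{E}_\pi[a_y\mid s]=\mu_y$, and conclude from $D_F\ge 0$ with equality only at $a_y=\mu_y$. Your additions are sound: the integrability check matches the paper's appendix regularity conditions, the strict-convexity argument for $g(z)=z-\log z-1$ proves the nonnegativity the paper only asserts, and you rightly note that ``strictly bounded'' holds only when $a_y$ is nondegenerate.
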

\begin{proof}
Appendix~\ref{sec:jensen_gap_proof}.
\end{proof}
Lemma~\ref{lem:jensen_gap} reveals a strict optimization pressure toward determinism. The agent pays a quantifiable penalty for acting stochastically, which drives the policy to collapse around its mean prediction $\mu_t$.

We can formalize this dynamic by applying the standard log-score identity ($\mathbb{E}_{y \sim q}[\log \mu_y] = -H(q) - \mathrm{KL}(q\|\mu)$) to the variance penalty. This decomposes the expected per-step return into an optimal ceiling and two distinct penalties.
\begin{equation}
\mathbb{E}_{y \sim q,\, a_t \sim \pi}[\log a_{t,y}] \;=\; \underbrace{-H(q)}_{\text{reward ceiling}} - \underbrace{\mathrm{KL}(q \| \mu_t)}_{\text{bias penalty}} - \underbrace{\mathbb{E}_{y \sim q}\,\mathbb{E}_\pi[D_F(a_{t,y} \| \mu_{t,y})]}_{\text{variance penalty}}
\label{eq:reward_decomp}
\end{equation}
The maximum achievable return at any step is $-H(q)$, the negative entropy of the true label distribution. 
However, the agent falls short of this optimal ceiling due to two forces. 
First, the variance penalty $\mathbb{E}_\pi[D_F]$ from Lemma~\ref{lem:jensen_gap} measures the stochastic spread around the mean. 
Second, the bias penalty $\mathrm{KL}(q\|\mu_t)$ measures the standard cross-entropy misalignment between the policy mean and the true class probabilities. 
Maximizing this return requires driving both the variance and the bias to zero. 
Because the geometric horizon places strictly positive weight on \emph{every} step, the agent cannot defer its accuracy to the end of the sequence and must optimize this per-step return everywhere.

\begin{proposition}[Optimal policy target]
\label{prop:target}
If the true conditional distribution $q(\cdot | x)$ is realizable, the optimal policy $\pi^*$ is deterministic and outputs $a_t^* = q(\cdot | x)$ at every step 
. At this optimum, the expected return is $-H(q(\cdot|x)) - \log a_{0,y}$, which reduces to $-H(q(\cdot|x)) + \log K$ under the uniform prior.
\end{proposition}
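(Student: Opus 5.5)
The plan is to establish an upper bound on the per-example expected return, valid for every admissible policy, and then show that the policy described in the statement attains it. By Eq.~\eqref{eq:telescope}, $J_{x,y}(\theta) + \log a_{0,y} = \mathbb{E}_{N\sim\mathrm{Geom}(1-\gamma)}\mathbb{E}_{\pi}[\log a_{N,y}]$. Since $N$ is independent of the rollout, this equals $\sum_{t\ge 1} (1-\gamma)\gamma^{t-1}\,\mathbb{E}_\pi[\log a_{t,y}]$. Taking expectation over $y\sim q(\cdot\mid x)$, which is how the objective $J(\theta)$ weighs labels at fixed $x$, turns every summand into exactly the left-hand side of Eq.~\eqref{eq:reward_decomp}. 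That expression is at most $-H(q)$, because both the bias penalty $\mathrm{KL}(q\|\mu_t)$ and the variance penalty from Lemma~\ref{lem:jensen_gap} are nonnegative. Because the geometric weights sum to one, the expected return is therefore at most $-H(q(\cdot\mid x)) - \log a_{0,y}$ for any policy.

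For attainment, I will use the realizability assumption: there exist parameters under which the policy puts all its mass on $a_t = q(\cdot\mid x)$ at every step $t\ge1$, from every reachable state. This is a deterministic policy, or a limit of Dirichlet policies with concentration tending to infinity. Under it, $\mu_t = q$, so the KL term is zero, and $a_{t,y}=\mu_{t,y}$ almost surely, so the variance term is zero. Every per-step term then equals $-H(q)$ and the bound is achieved. This gives both the optimal value $-H(q) - \log a_{0,y}$ and the claim that the policy in the statement is optimal. Substituting $a_0$ uniform gives $\log a_{0,y} = -\log K$.

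For the characterization (that every optimal policy is deterministic and outputs $q$), I will run the argument in reverse. Each geometric weight $(1-\gamma)\gamma^{t-1}$ is strictly positive, so equality in the sum forces equality at every step $t$. That means $\mathrm{KL}(q\|\mu_t)=0$, hence $\mu_t = q$. It also means the variance penalty is zero for every $y$ in the support of $q$, and by the equality case of Lemma~\ref{lem:jensen_gap} this gives $a_{t,y} = \mu_{t,y}$ almost surely. When $q$ has full support this yields $a_t = q$ almost surely, i.e.\ determinism. When some $q_y = 0$, the coordinate $a_{t,y}$ is pinned only through the mean, but its mean is $0$ and $a_{t,y}\ge 0$, so it is almost surely $0$ as well.

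I expect the main obstacle to be technical rather than conceptual. First, the statement should be read at the level of $\mathbb{E}_{y\sim q}$ for fixed $x$ rather than per label, since the per-label optimum would be a point mass on $y$; I will make this explicit. Second, a Dirichlet family never contains an exact point mass, so "optimal deterministic policy" must be understood as the supremum over a realizable class that includes degenerate policies, or as a limit of increasing concentration. Third, interchanging the infinite sum with the expectation needs the integrability hypothesis already assumed in Lemma~\ref{lem:geom_discount}, together with $\log a_{t,y} > -\infty$ almost surely. I will handle this by restricting to policies with finite return, since any other policy is trivially suboptimal.
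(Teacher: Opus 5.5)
Your proof is correct, and its skeleton matches the paper's: expand the geometric mixture into per-step terms, bound each by $-H(q)$, attain the bound under realizability, and use the strictly positive weights $(1-\gamma)\gamma^{t-1}$ to force equality at every step. The one real difference is the per-step bound. You go through the bias--variance split of Eq.~\ref{eq:reward_decomp} and the equality case of Lemma~\ref{lem:jensen_gap}, which is the route of the main-text sketch. The appendix proof never introduces a mean at all. It applies $\mathbb{E}_{y\sim q}[\log a_{t,y}] = -H(q) - \mathrm{KL}(q\|a_t)$ to each realized action and averages over $\pi$, so the single condition $\mathrm{KL}(q\|a_t)=0$ a.s.\ gives $a_t = q$ a.s.\ in one stroke. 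That is more economical in three ways. There is no Bregman divergence, hence no division by $\mu_{t,y}$. There is no separate argument for classes with $q_y=0$, which you supply correctly via $\mathbb{E}[a_{t,y}]=0$ and $a_{t,y}\ge 0$. And there is no need to say which mean is meant when the state $s_t$ is itself random. In your route, $\mu_t$ has to be read either as the per-state mean with an outer expectation, giving $\mathbb{E}_{s_t}[\mathrm{KL}(q\|\mu_t(s_t))]$, or as the marginal mean given $x$. Both readings work, but you leave the choice implicit. In exchange, your route describes the gap more finely. The paper's shortfall $\mathbb{E}_\pi[\mathrm{KL}(q\|a_t)]$ equals exactly your bias term plus your variance term. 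So you make explicit that a non-degenerate policy, e.g.\ a Dirichlet with bounded concentration, still pays a variance penalty even when its mean is already correct.
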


\begin{proof}[Proof sketch]
Equation~\ref{eq:reward_decomp} establishes that the per-step return is maximized only when both the bias and variance penalties are zero. This condition requires the policy to be deterministic and exactly match the true distribution $q(\cdot | x)$. Since the geometric horizon weights every step positively, the optimal policy must attain this perfectly accurate prediction at all times. Full proof in Appendix~\ref{sec:proofs}.
\end{proof}

The optimal policy therefore recovers the same solution as standard cross-entropy, while the RL formulation changes how that solution is approached.

\subsection{Bounded Confidence under Shared Refinement}
\label{sec:finite_confidence}

On separable data, standard cross-entropy admits no finite minimizer because gradient descent drives logit norms to infinity and causes severe overconfidence~\citep{soudry2018implicit}. Since RIC shares the same optimum, the question is whether it also inherits the same pathology. Here, we show that the shared refinement structure changes this behavior.

Our shared-classifier architecture inherently prevents this when optimizing over fixed features. 
In our setting, the policy mean is parameterized by a shared linear classifier $\mu_t = \mathrm{softmax}(W \tau_t(x))$. 
The same weights $W$ are applied at every refinement step, so they must classify early thought states $\tau_t(x)$ that are not yet linearly separable. 
The geometric horizon places strictly positive probability on these early steps. 
If the logit norm $\|W\| \to \infty$, overconfident misclassifications on these early ambiguous states drive the expected log-score to $-\infty$, which bounds the optimal logit scale.

\begin{proposition}[Finite logit scale under shared weights]
\label{prop:finite_scale}
Let the policy mean be $\mu_t = \mathrm{softmax}(W \tau_t(x))$ with a shared classifier $W$ evaluated over fixed, bounded thought states ($\|\tau_t(x)\| \le B$).
Assume there is an early step $t_0$ where the labeled states $(\tau_{t_0}(x), y)$ are non-separable in a strong sense (see Appendix for the formal condition). Then, the expected return diverges to $-\infty$ as $\|W\| \to \infty$. Consequently, the objective admits a finite logit-scale maximizer.
\end{proposition}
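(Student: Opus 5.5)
The plan is to work with the deterministic surrogate first and then account for stochasticity. By Eq.~\ref{eq:telescope} and Lemma~\ref{lem:jensen_gap}, the expected return is bounded above by
\[
\sum_{t\ge1}(1-\gamma)\gamma^{t-1}\,\mathbb{E}_{(x,y)}[\log \mu_{t,y}] - \log a_{0,y},
\]
since the variance penalty is nonnegative. Every term $\log\mu_{t,y}\le 0$, so it suffices to show that the single term at step $t_0$, which carries weight $(1-\gamma)\gamma^{t_0-1}>0$, tends to $-\infty$ as $\|W\|\to\infty$.

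The formal non-separability condition I would adopt is the following. For every unit-norm direction $U$ (a $K\times d$ matrix with $\|U\|=1$), with positive probability under $\mathcal{D}$ there is a labeled state $(\tau_{t_0}(x),y)$ and a class $j\ne y$ such that
\[
(U\tau_{t_0}(x))_j-(U\tau_{t_0}(x))_y\ge 0 .
\]
Moreover, the probability mass of this margin violation, when quantified as
\[
m(U)=\mathbb{E}\bigl[\max_j (U\tau)_j-(U\tau)_y\bigr]_+,
\]
should be bounded below by some $c>0$ uniformly over the unit sphere. Write $W=sU$ with $s=\|W\|$. Using $-\log\mathrm{softmax}(z)_y\ge \max_j z_j - z_y$, we get
\[
-\mathbb{E}[\log\mu_{t_0,y}]\ge s\,\mathbb{E}\bigl[\max_j(U\tau)_j-(U\tau)_y\bigr]_+\ge s\,c,
\]
which diverges linearly in $s$ uniformly over directions. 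This proves the divergence claim.

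For the finite maximizer, the objective $G(W)$ (the geometric mixture of expected log-scores) is continuous in $W$. This follows because $\tau$ is bounded by $B$, so logits are bounded by $\|W\|B$, and dominated convergence applies. The objective is also finite at $W=0$, where it equals $-\log K$ per step. By the divergence bound, the superlevel set $\{W: G(W)\ge G(0)\}$ is contained in the ball $\|W\|\le \log K/((1-\gamma)\gamma^{t_0-1}c)$, hence it is compact. A continuous function on a nonempty compact set attains its maximum, which gives a finite logit-scale maximizer. For the stochastic policy, the same argument applies, since the stochastic return is at most the mean return (Lemma~\ref{lem:jensen_gap}) and the optimum is attained by the deterministic policy.

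The main obstacle is the uniformity in the direction $U$. Pointwise non-separability only gives $m(U)>0$ for each $U$. I would recover uniformity by showing $m$ is continuous on the compact unit sphere: it is Lipschitz with constant $2B$, because $\tau$ is bounded. A continuous positive function on a compact set has a positive minimum, so $c=\min m>0$. This is why the appendix condition should be phrased as ``no direction achieves zero expected hinge violation'', which is the strong sense of non-separability.
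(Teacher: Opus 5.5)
\textbf{Gap: the softmax shift invariance.} The paper handles this explicitly in Step 1 of its proof, by fixing the last row of $W$ to zero and working in the reduced $(K-1)\times d$ space. Your proposal works in the full $K\times d$ space, where your strong non-separability condition can never hold. For $U$ proportional to $\mathbf{1}v^\top$ with $v\neq 0$, all entries of $U\tau_{t_0}(x)$ coincide. So $\max_j(U\tau)_j-(U\tau)_y=0$ for every $(x,y)$, and $m(U)=0$ whatever the data distribution.

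The consequences are concrete:
\begin{itemize}
\item $G(s\mathbf{1}v^\top)=-\log K=G(0)$ for every $s$, so the return does not diverge along these directions.
\item The superlevel set $\{W: G(W)\ge G(0)\}$ contains a whole linear subspace and is unbounded, so the compactness step fails.
\item Your minimum-over-the-sphere argument has the same defect: $m$ is continuous on the unit sphere but vanishes on the shift directions, so its minimum there is $0$.
\end{itemize}
As written, your hypothesis is unsatisfiable and the divergence conclusion is false in your parametrization.

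\textbf{Repair.} Quotient out the shift. Note that $G(W+\mathbf{1}v^\top)=G(W)$, and restrict to a complement of $\{\mathbf{1}v^\top\}$ (last row zero, or rows summing to zero). Impose the condition only for nonzero $U$ in that complement, and read ``$\|W\|\to\infty$'' and ``finite logit scale'' modulo shifts. A maximizer on the complement is then a maximizer of the original objective.

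With that change your proof goes through, and it is slightly tighter than the paper's. The paper passes from divergence along each ray, $\mathcal{L}_{t_0}(\alpha U)\ge\alpha c_U$, directly to coercivity without addressing uniformity in $U$. Your $2B$-Lipschitz bound on the compact unit sphere of the reduced space supplies $c=\min_U m(U)>0$. It also gives the explicit radius $\log K/((1-\gamma)\gamma^{t_0-1}c)$. Convexity of the loss in $W$ would be another way to close that step.

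\textbf{Minor point.} Your remark that the optimum ``is attained by the deterministic policy'' is not available within an interior-support family such as the clipped Dirichlet. It is also unnecessary: the paper's proof concerns only the mean objective, and your bound $\mathbb{E}_\pi[\log a_{t,y}]\le\log\mu_{t,y}$ already transfers the divergence to the stochastic return.
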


\begin{proof}[Proof sketch]
Standard classifiers drive weight norms to infinity because increasing confidence on correct, separable predictions continuously lowers the cross-entropy loss. In our setting, the shared classifier $W$ must also evaluate early unrefined states. If the logit norm grows too large, the penalty for confidently misclassifying these early ambiguous states dominates the expected return. This inherent tension bounds the optimal logit scale. Full proof in Appendix~\ref{sec:proofs}.
\end{proof}

These properties emerge naturally from our sequential formulation, in contrast to standard supervised classification, which lacks anytime predictions and produces unbounded logits on separable data. 
The next section tests these structural differences empirically for vision tasks.

\section{Results}
\label{sec:res}
\begin{figure}[t]
    \centering
    \includegraphics[width=\textwidth]{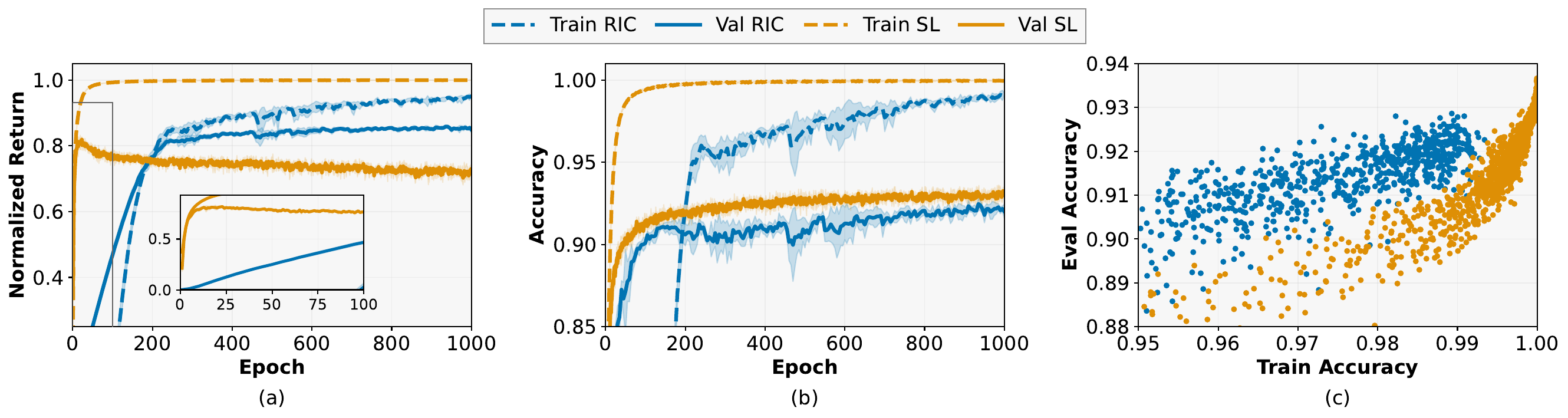}
    \caption{
\textbf{CIFAR-10 training dynamics of RIC and SL.}
Trained for $1000$ epochs with $5$ random seeds.
(a) Normalized return over training. RIC improves more slowly due to policy-gradient optimization, but ultimately attains higher validation return.
(b) Classification accuracy on train and validation sets. Both objectives optimize predictive performance, and RIC matches SL in accuracy.
(c) Evaluation accuracy as a function of training accuracy across epochs. For comparable training accuracy, RIC achieves slightly higher evaluation accuracy, indicating improved generalization.
}
    \label{fig:dynamics}
\end{figure}

The preceding sections motivate RIC as an RL alternative to single-pass imitation in Supervised Learning (SL), and yield concrete predictions about optimization behavior. 
We now evaluate these claims empirically and assess RIC as a practical image classifier.
Our results are organized to: \textbf{(i)} compare training dynamics against SL, \textbf{(ii)} characterize emergent behaviors such as calibration, value-based halting, and \textbf{(iii)} benchmark accuracy and calibration across multiple datasets.

\label{sec:setup}
We compare RIC against standard SL and two supervised adaptive-computation baselines, ACT \citep{graves2016adaptive} and PonderNet \citep{banino2021pondernet}. All methods use the same ResNet-34 backbone to encode an image $x$ into an embedding $e$ and are trained end-to-end. SL predicts from this embedding in a single forward pass without recurrence. ACT and PonderNet use the same GRU-based recurrent refinement module as RIC, differing only in the training objective and halting mechanism.
Full hyperparameter settings, including hyperparameter sweeps, are reported in the supplementary material.
RIC uses a Dirichlet policy over the simplex with discount $\gamma{=}0.8$ and truncated rollouts of $T{=}20$ refinement steps with value bootstrapping. 
At $\gamma{=}0.8$ the residual weight $\gamma^{20} \approx 0.01$, so truncation negligibly affects the infinite-horizon objective.
We evaluate on CIFAR-10 \citep{krizhevsky2009learning}, CIFAR-10N \citep{wei2021learning}, SVHN \citep{netzer2011reading}, and ImageWoof \citep{howard2020imagenette}, a fine-grained subset of ImageNet \citep{deng2009imagenet}.
In all plots, shaded bands show 95\% confidence intervals across random seeds.

\subsection{Optimization Dynamics}
\label{sec:dynamics}

Figure~\ref{fig:dynamics} compares the training dynamics on CIFAR-10.
RIC improves more slowly in the early phase due to policy-gradient variance but both methods converge to comparable accuracy (Figure~\ref{fig:dynamics}(b)), consistent with the shared log-score objective.
Despite slower early learning, RIC achieves higher validation return at convergence and maintains it through the end of training, whereas SL models overfit in terms of return (Figure~\ref{fig:dynamics}(a)).
Although SL slightly outperforms RIC in accuracy (Figure~\ref{fig:dynamics}(b)), Figure~\ref{fig:dynamics}~(c) reveals a systematic generalization advantage.
For a fixed level of training accuracy, RIC attains higher evaluation accuracy, suggesting that the accuracy gap reflects training speed rather than asymptotic performance.

\begin{figure}[t]
    \centering
    \includegraphics[width=\textwidth]{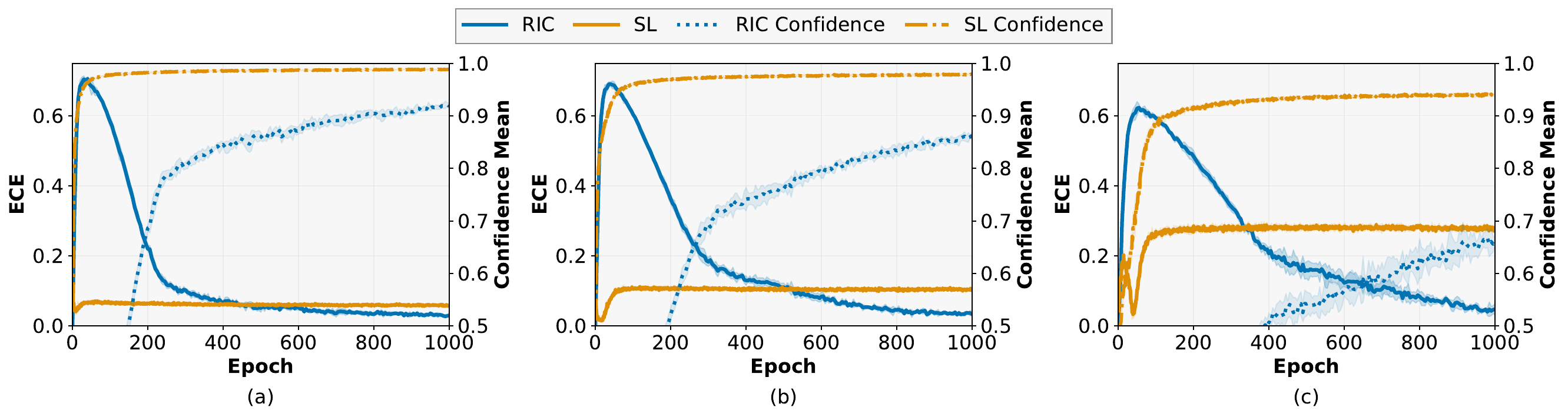}
    \caption{
\textbf{Calibration dynamics under varying label noise.}
ECE and Confidence over training on CIFAR-10 (a), CIFAR-10N with aggregated human labels (9.03\% noise) (b), and CIFAR-10N with worst-case labels (40.21\% noise) (c).
RIC consistently achieves lower ECE than SL throughout training with calibration remaining stable as label noise increases.
}
    \label{fig:noise}
\end{figure}

Overall, while RIC requires longer training due to the intrinsic inefficiency of policy-gradient optimization, its objective induces more stable confidence dynamics. 
At equal training accuracy, RIC achieves higher validation accuracy (Figure~\ref{fig:dynamics}(c)). These dynamics manifest in several observable behaviors related to model calibration.

Figure~\ref{fig:noise} examines ECE over training under increasing levels of human label noise. RIC exhibits an initial phase of systematic underconfidence, in which predictive accuracy improves faster than model confidence. 
Consequently, ECE temporarily peaks at values above $0.6$ early in training. Over time, however, the agent gradually increases its confidence while maintaining accuracy, yielding a self-calibrating effect as training progresses.

In contrast, SL models maintain low ECE only during a limited early phase of training before calibration deteriorates as confidence continues to increase.
On standard CIFAR-10, RIC achieves a slightly lower minimum ECE than SL.
Importantly, this minimum occurs at a later training stage when validation accuracy is already high (Figure~\ref{fig:dynamics}(b)), indicating that calibration improves alongside predictive performance rather than degrading as training continues.
This calibration gap becomes more pronounced as label noise increases. Under aggregated human annotations (9.03\% noise) and worst-case labels (40.21\% noise), SL exhibits substantial calibration degradation, whereas RIC remains comparatively stable.
This robustness follows from the more gradual evolution of predictive confidence under the reinforced objective.

\subsection{Evaluation}
\label{sec:evaluation}
\begin{figure}[t]
    \centering
    \includegraphics[width=\textwidth]{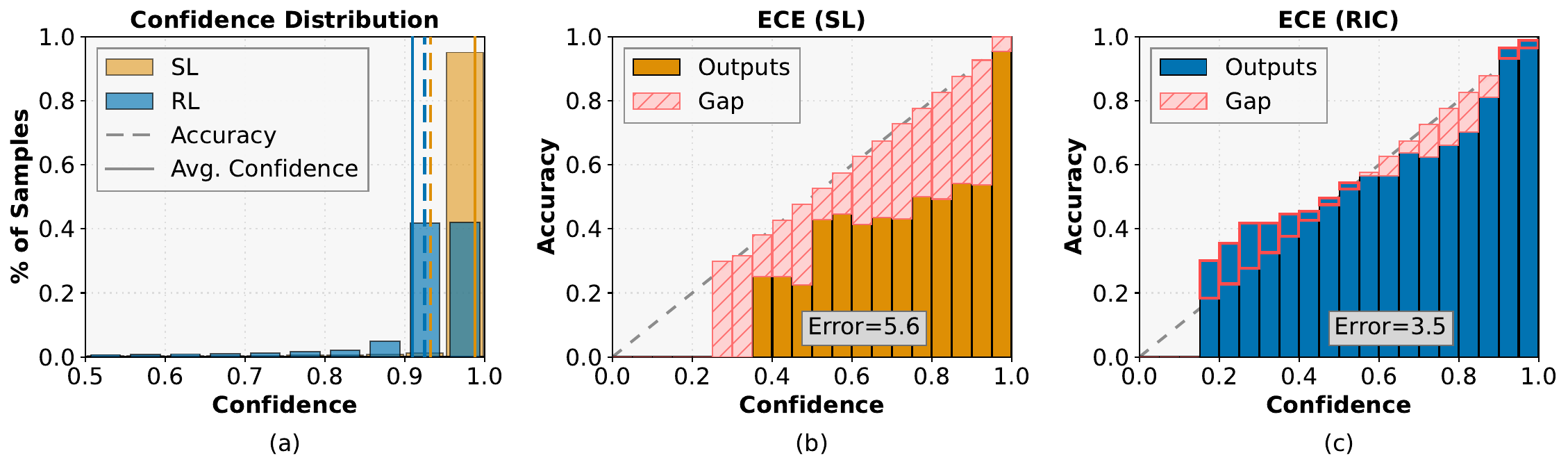}
    \caption{
\textbf{Confidence distributions and reliability diagrams on CIFAR-10 (test).}
(a) Confidence histogram averaged across models trained with five random seeds.
(b) Reliability diagram for SL.
(c) Reliability diagram for RIC.
SL produces a sharply peaked confidence distribution and exhibits systematic overconfidence, whereas RIC predictions are more dispersed and tend to be slightly underconfident across most bins, resulting in improved calibration.
}
\label{fig:calibration}
\end{figure}

\begin{table}[b]
\centering
\footnotesize
\setlength{\tabcolsep}{4pt}
\renewcommand{\arraystretch}{1.1}

\resizebox{\linewidth}{!}{
\begin{tabular}{l ccc ccc ccc}
\toprule
& \multicolumn{3}{c}{CIFAR-10} & \multicolumn{3}{c}{SVHN} & \multicolumn{3}{c}{ImageWoof} \\
\cmidrule(lr){2-4} \cmidrule(lr){5-7} \cmidrule(lr){8-10}

& $Acc_{train}$ [\%] & $Acc_{test}$ [\%] & $ECE_{test}$
& $Acc_{train}$ [\%] & $Acc_{test}$ [\%] & $ECE_{test}$
& $Acc_{train}$ [\%] & $Acc_{test}$ [\%] & $ECE_{test}$ \\
\midrule

SL
& $99.83 $  & $92.57 $ & $0.059 $
& $99.88 $ & $95.80 $ & $0.033 $
& $90.91 $ & $82.61 $ & $0.102 $ \\

ACT
&  $99.78$ & $92.51$ & $0.059$
&  $99.80$ & $95.68$ & $0.033$
& $91.90$ & $83.62$ & $0.097$ \\

PonderNet
&  $99.78$ & $92.43$ & $0.059$
&  $99.83$& $95.79$ & $0.033$
& $91.52$ & $83.68$ & $0.097$ \\

\textbf{RIC (ours)}
&  $99.33$ & $\mathbf{93.05}$ & $\mathbf{0.035}$
&  $99.40$ & $\mathbf{95.86}$ & $\mathbf{0.018}$
& $86.37$ & $\mathbf{85.39}$ & $\mathbf{0.049}$ \\

\bottomrule
\end{tabular}
}

\caption{
\textbf{Test accuracy and ECE across CIFAR-10, SVHN, and ImageWoof.}
Results are averaged over five seeds. Models are selected by best validation accuracy.
RIC achieves competitive test accuracy across datasets while consistently reducing calibration error compared to supervised baselines.
Confidence intervals are omitted, but can be found in the Supplementary Materials.
}

\label{tab:results_3datasets_transposed}

\end{table}

Figure~\ref{fig:calibration} illustrates the resulting differences in the final predictive distributions on the test set.
SL models produce a sharply peaked confidence distribution, concentrating predictions near extreme confidence values.
Moreover, the accuracy of the SL model is substantially lower than its mean confidence.
The corresponding reliability diagram reveals systematic overconfidence across all bins.
In contrast, RIC produces a broader confidence distribution and tends to be mildly underconfident throughout the diagram, resulting in a closer alignment between accuracy and confidence.

These observations reflect a fundamental difference in optimization dynamics. 
Cross-entropy training drives predicted probabilities toward extremes, even when labels are noisy, amplifying overconfidence on mislabeled examples.
In contrast, RIC avoids aggressively sharpening predictions on ambiguous or corrupted examples, leading to stable confidence dynamics and improved calibration.

Table~\ref{tab:results_3datasets_transposed} summarizes accuracy and calibration across all datasets.
RIC achieves test accuracies comparable to SL and consistently reduces ECE, despite slightly lower training accuracy in some cases.
Additionally, adaptive computation baselines such as ACT and PonderNet behave similarly to SL in terms of ECE, suggesting that adaptive inference alone does not improve calibration.
Rather, the calibration effect arises from the RL objective used by RIC.
The calibration advantage is small on easy datasets such as SVHN, where accuracy is already high, but becomes more pronounced on harder tasks.
In particular, ImageWoof, which is characterized by fine-grained classes and fewer training examples, shows the largest calibration improvement, highlighting the benefit of RIC's confidence dynamics in harder recognition settings with limited data.

The calibration also evolves during iterative refinement. Figure~\ref{fig:calibration2}(b) shows that confidence increases across refinement steps while ECE decreases on the ImageWoof test set. 
At early steps the agent maintains low confidence and gradually sharpens its beliefs as additional computation is used. 
As confidence stabilizes, the predicted future return approaches zero, providing a natural stopping criterion that terminates inference when further refinement yields marginal gains.
Figure~\ref{fig:calibration2}(a) shows this behavior on different examples. 
Easy inputs saturate quickly, while intermediate inputs benefit from additional refinement steps. 
On hard images, the policy stops before additional negative reward accrues.
In addition, Figure~\ref{fig:calibration2}(c) confirms that the value-based halting rule allocates more steps to inputs that are ultimately classified correctly. 
This is consistent with prior observations that, on hard images where the policy tends to be incorrect, early stopping is induced by the value function.

\begin{figure}[t]
  \centering
  \includegraphics[width=\textwidth]{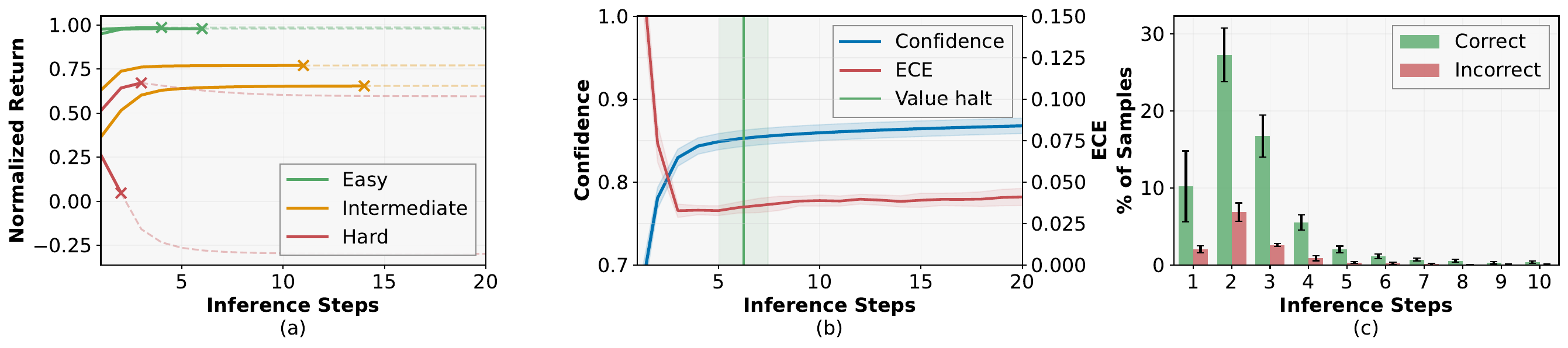}
  \caption{
\textbf{Adaptive computation analysis on ImageWoof (test).}
(a) Normalized return by input difficulty (easy, intermediate, hard). 
(b) Average confidence and ECE as a function of inference steps. The vertical line marks the mean value-based halting step. 
(c) Distribution of halting steps for correctly and incorrectly classified inputs. 
}
\label{fig:calibration2}
\end{figure}

\label{sec:exp_comparison}

\section{Related Work}
\label{sec:rw}
Earlier adaptive computation methods learn halting behavior under supervised training signals. 
\citet{graves2016adaptive} introduce Adaptive Computation Time, a differentiable halting mechanism for recurrent models that trades task loss against an explicit ponder cost. 
\citet{banino2021pondernet} build on this idea with PonderNet, which learns an explicit distribution over stopping times and regularizes it toward a geometric prior via a KL term, making the compute accuracy trade-off more stable and yielding a usable halting rule at deployment. 
\citet{dehghani2018universal} show that the Universal Transformer can perform recurrent refinement with shared weights and can apply ACT-style per-token halting so each position is updated a variable number of times. \citet{figurnov2017spatially} extend ACT to convolutional residual networks by allowing different spatial locations to halt at different depths.

Several earlier lines of work use RL for classification, but they do not use a single agent to iteratively refine a continuous belief distribution at scale. 
\citet{mnih2014recurrent} learn a sequential policy for Recurrent Attention Models, while the final class prediction is still trained with a standard supervised objective. 
SkipNet and BlockDrop learn policies that decide which layers or residual blocks to execute for each input, with rewards that reflect both accuracy and computation, while the predicted label is still produced by one supervised forward pass through the chosen path \citep{wang2018skipnet,wu2018blockdrop}. 
\citet{wiering2011reinforcement} also frame classification as a MDP and evaluate RL algorithms in small-scale settings, including actor critic variants designed for this formulation. 
\citet{mousavi2019multi} study iterative belief updates for image classification using multi-agent RL with multiple cooperating agents that share information and update beliefs through consensus.

More recent work treats deliberation and computation control as part of the action space. 
\citet{hanna2025thinking} formalize deliberation as mental actions in a thought MDP and analyze when model-free RL will select such actions under return maximization. 
\citet{orenstein2025toward} similarly give agents explicit compute-related actions and feedback about compute cost, then train policies that trade off performance against computation. 
These formulations parallel chain-of-thought prompting, which elicits intermediate reasoning by generating additional tokens rather than updating a latent belief representation \citep{wei2022chain}. 
In latent reasoning for language models, \citet{hao2025training} train models to reason entirely in a continuous latent space without decoding to tokens at intermediate steps, and \citet{ning2025learning} use RL to learn when to stop such latent computation. 
Both support the broader premise that the depth and medium of computation can be optimized by reward rather than fixed by architecture.

Our use of recurrent architectures is also consistent with related findings in RL and decision-making. \citet{wispinski2026primate} show that deep recurrent agents trained with RL on noisy perceptual discrimination tasks can develop primate-like decision behaviors such as speed accuracy trade-offs and changes of mind. \citet{lambrechts2022recurrent} show that recurrent networks trained to approximate value functions in partially observable environments develop hidden states that become increasingly correlated with Bayesian belief states over task-relevant variables. 

Calibration provides another reason to pursue iterative belief refinement. Cross-entropy training induces overconfidence \citep{guo2017calibration}, which \citet{wei2022mitigating} trace to unbounded logit-norm growth during training and mitigate via logit normalization. In language models, \citet{leng2025taming} show that generic RLHF further amplifies overconfidence. \citet{chen2025rethinking} show that this overconfidence can be misaligned with repeated test-time sampling and propose a modified objective that limits confidence under that evaluation setting. Much of the calibration literature relies on post-hoc adjustments such as temperature scaling \citep{guo2017calibration}. An alternative is to address calibration at training time through reward design: \citet{damani2025beyond} and \citet{baniharouni2026rewarding} use proper scoring rules as rewards to train language models to express calibrated uncertainty, though both frame the problem as single-step confidence estimation rather than iterative belief refinement.

When we instantiate a continuous policy with a Dirichlet distribution, RIC is superficially related to evidential deep learning, which predicts Dirichlet parameters in a single supervised forward pass to represent uncertainty \citep{sensoy2018evidential}. \citet{juergens2024epistemic} report that epistemic uncertainty from evidential methods is not faithfully represented in general, highlighting limitations of this approach under distribution shift and related settings.

\section{Conclusion}
\label{sec:conc}
We presented Reinforced Iterative Classification (RIC), an RL formulation of classification in which a recurrent agent refines a predictive distribution over classes rather than producing a prediction in a single pass. 
Theoretically, the discounted objective yields an anytime predictor and, under the conditions of Proposition~\ref{prop:finite_scale}, favors a finite logit scale under shared refinement. 
Empirically, across CIFAR-10, SVHN, and ImageWoof, RIC achieves accuracy comparable to supervised baselines while improving calibration, with the clearest gains on the more challenging settings. 
In addition, these improved calibration findings seem robust to human label noise, which further strengthens the observation that RIC induces more stable confidence dynamics.
The learned value function also provides a practical stopping rule, allowing the model to spend more refinement steps on examples that benefit from additional computation and halt when further updates are unlikely to improve the prediction. 
More broadly, these results suggest that classification can be treated as sequential belief refinement rather than only as single-step label imitation.
\paragraph{Limitations.} 
In our setup, the supervised baselines are trained for $300$ epochs, whereas RIC is trained for $2000$, reflecting a higher optimization cost due to the variance and inefficiency of policy-gradient updates. 
A second limitation is the current policy parameterization. 
Because the Dirichlet policy becomes numerically delicate near the boundary of the simplex, stability becomes hardest to maintain precisely when the model is concentrating mass on a single class. 
This is especially problematic for classification tasks, because typically the provided label lays on the simplex border.
The supplementary ablations also show that the choice of policy distribution affects both stability and learning speed. 
Finally, the present experiments are limited to moderate-scale image classification benchmarks. 
Extending RIC to larger label spaces and more complex domains will likely require better policy parameterizations and more scalable optimization, since exploration on the simplex and log-score-based updates become more difficult as the number of classes grows.

\bibliography{main}

@inproceedings{hanna2025thinking,
  title     = {When Can Model-Free Reinforcement Learning Be Enough for Thinking?},
  author    = {Josiah P. Hanna and Nicholas E. Corrado},
  booktitle = {Advances in Neural Information Processing Systems (NeurIPS)},
  year      = {2025},
  url       = {https://arxiv.org/abs/2506.17124}
}

@article{mnih2014recurrent,
  title={Recurrent models of visual attention},
  author={Mnih, Volodymyr and Heess, Nicolas and Graves, Alex and Kavukcuoglu, Koray},
  journal={Advances in neural information processing systems},
  volume={27},
  year={2014}
}

@article{wispinski2026primate,
  title={Primate-like perceptual decision making emerges through deep recurrent reinforcement learning},
  author={Wispinski, Nathan J and Stone, Scott A and Singhal, Anthony and Pilarski, Patrick M and Chapman, Craig S},
  journal={arXiv preprint arXiv:2601.12577},
  year={2026}
}

@article{graves2016adaptive,
  title={Adaptive computation time for recurrent neural networks},
  author={Graves, Alex},
  journal={arXiv preprint arXiv:1603.08983},
  year={2016}
}

@article{banino2021pondernet,
  title={Pondernet: Learning to ponder},
  author={Banino, Andrea and Balaguer, Jan and Blundell, Charles},
  journal={arXiv preprint arXiv:2107.05407},
  year={2021}
}

@article{dehghani2018universal,
  title={Universal transformers},
  author={Dehghani, Mostafa and Gouws, Stephan and Vinyals, Oriol and Uszkoreit, Jakob and Kaiser, {\L}ukasz},
  journal={arXiv preprint arXiv:1807.03819},
  year={2018}
}

@inproceedings{figurnov2017spatially,
  title={Spatially adaptive computation time for residual networks},
  author={Figurnov, Michael and Collins, Maxwell D and Zhu, Yukun and Zhang, Li and Huang, Jonathan and Vetrov, Dmitry and Salakhutdinov, Ruslan},
  booktitle={Proceedings of the IEEE conference on computer vision and pattern recognition},
  pages={1039--1048},
  year={2017}
}

@article{orenstein2025toward,
  title={Toward Agents That Reason About Their Computation},
  author={Orenstein, Adrian and Chen, Jessica and Santos, Gwyneth Anne Delos and Sapara, Bayley and Bowling, Michael},
  journal={arXiv preprint arXiv:2510.22833},
  year={2025}
}

@inproceedings{wang2018skipnet,
  title={Skipnet: Learning dynamic routing in convolutional networks},
  author={Wang, Xin and Yu, Fisher and Dou, Zi-Yi and Darrell, Trevor and Gonzalez, Joseph E},
  booktitle={Proceedings of the European conference on computer vision (ECCV)},
  pages={409--424},
  year={2018}
}

@inproceedings{wu2018blockdrop,
  title={Blockdrop: Dynamic inference paths in residual networks},
  author={Wu, Zuxuan and Nagarajan, Tushar and Kumar, Abhishek and Rennie, Steven and Davis, Larry S and Grauman, Kristen and Feris, Rogerio},
  booktitle={Proceedings of the IEEE conference on computer vision and pattern recognition},
  pages={8817--8826},
  year={2018}
}

@inproceedings{guo2017calibration,
  title={On calibration of modern neural networks},
  author={Guo, Chuan and Pleiss, Geoff and Sun, Yu and Weinberger, Kilian Q},
  booktitle={International conference on machine learning},
  pages={1321--1330},
  year={2017},
  organization={PMLR}
}

@inproceedings{wiering2011reinforcement,
  title={Reinforcement learning algorithms for solving classification problems},
  author={Wiering, Marco A and Van Hasselt, Hado and Pietersma, Auke-Dirk and Schomaker, Lambert},
  booktitle={2011 IEEE Symposium on Adaptive Dynamic Programming and Reinforcement Learning (ADPRL)},
  pages={91--96},
  year={2011},
  organization={IEEE}
}

@InProceedings{xie2025simple,
  title = 	 {Simple Policy Optimization},
  author =       {Xie, Zhengpeng and Zhang, Qiang and Yang, Fan and Hutter, Marco and Xu, Renjing},
  booktitle = 	 {Proceedings of the 42nd International Conference on Machine Learning},
  pages = 	 {68813--68824},
  year = 	 {2025},
  editor = 	 {Singh, Aarti and Fazel, Maryam and Hsu, Daniel and Lacoste-Julien, Simon and Berkenkamp, Felix and Maharaj, Tegan and Wagstaff, Kiri and Zhu, Jerry},
  volume = 	 {267},
  series = 	 {Proceedings of Machine Learning Research},
  month = 	 {13--19 Jul},
  publisher =    {PMLR},
  url = 	 {https://proceedings.mlr.press/v267/xie25m.html}
}

@article{wei2022chain,
  title={Chain-of-thought prompting elicits reasoning in large language models},
  author={Wei, Jason and Wang, Xuezhi and Schuurmans, Dale and Bosma, Maarten and Xia, Fei and Chi, Ed and Le, Quoc V and Zhou, Denny and others},
  journal={Advances in neural information processing systems},
  volume={35},
  pages={24824--24837},
  year={2022}
}

@article{soudry2018implicit,
  title={The implicit bias of gradient descent on separable data},
  author={Soudry, Daniel and Hoffer, Elad and Nacson, Mor Shpigel and Gunasekar, Suriya and Srebro, Nathan},
  journal={Journal of Machine Learning Research},
  volume={19},
  number={70},
  pages={1--57},
  year={2018}
}

@inproceedings{lee2015deeply,
  title={Deeply-supervised nets},
  author={Lee, Chen-Yu and Xie, Saining and Gallagher, Patrick and Zhang, Zhengyou and Tu, Zhuowen},
  booktitle={Artificial intelligence and statistics},
  pages={562--570},
  year={2015},
  organization={Pmlr}
}

@article{sensoy2018evidential,
  title={Evidential deep learning to quantify classification uncertainty},
  author={Sensoy, Murat and Kaplan, Lance and Kandemir, Melih},
  journal={Advances in neural information processing systems},
  volume={31},
  year={2018}
}

@article{lambrechts2022recurrent,
  title={Recurrent networks, hidden states and beliefs in partially observable environments},
  author={Lambrechts, Gaspard and Bolland, Adrien and Ernst, Damien},
  journal={arXiv preprint arXiv:2208.03520},
  year={2022}
}

@article{juergens2024epistemic,
  title={Is epistemic uncertainty faithfully represented by evidential deep learning methods?},
  author={Juergens, Mira and Meinert, Nis and Bengs, Viktor and H{\"u}llermeier, Eyke and Waegeman, Willem},
  journal={arXiv preprint arXiv:2402.09056},
  year={2024}
}

@inproceedings{mousavi2019multi,
  title={Multi-agent image classification via reinforcement learning},
  author={Mousavi, Hossein K and Nazari, Mohammadreza and Tak{\'a}{\v{c}}, Martin and Motee, Nader},
  booktitle={2019 IEEE/RSJ International Conference on Intelligent Robots and Systems (IROS)},
  pages={5020--5027},
  year={2019},
  organization={IEEE}
}

@misc{baniharouni2026rewarding,
      title={Rewarding Doubt: A Reinforcement Learning Approach to Calibrated Confidence Expression of Large Language Models}, 
      author={David Bani-Harouni and Chantal Pellegrini and Paul Stangel and Ege Özsoy and Kamilia Zaripova and Nassir Navab and Matthias Keicher},
      year={2026},
      eprint={2503.02623},
      archivePrefix={arXiv},
      primaryClass={cs.CL},
      url={https://arxiv.org/abs/2503.02623}, 
}

@inproceedings{hao2025training,
  title={Training Large Language Models to Reason in a Continuous Latent Space},
  author={Hao, Shibo and Sukhbaatar, Sainbayar and Su, DiJia and Li, Xian and Hu, Zhiting and Weston, Jason and Tian, Yuandong},
  booktitle={International Conference on Learning Representations (ICLR)},
  year={2025}
}

@article{chen2025rethinking,
  title={Rethinking fine-tuning when scaling test-time compute: Limiting confidence improves mathematical reasoning},
  author={Chen, Feng and Raventos, Allan and Cheng, Nan and Ganguli, Surya and Druckmann, Shaul},
  journal={arXiv preprint arXiv:2502.07154},
  year={2025}
}

@inproceedings{leng2025taming,
  title={Taming Overconfidence in {LLMs}: Reward Calibration in {RLHF}},
  author={Leng, Jixuan and Dong, Chenguang and Zhou, Yihong and Jiang, Yiyang and Wang, Qiang and Qin, Bing},
  booktitle={International Conference on Learning Representations (ICLR)},
  year={2025}
}

@inproceedings{wei2022mitigating,
  title={Mitigating Neural Network Overconfidence with Logit Normalization},
  author={Wei, Hongxin and Xie, Renchunzi and Cheng, Hao and Feng, Lei and An, Bo and Li, Yixuan},
  booktitle={International Conference on Machine Learning (ICML)},
  year={2022}
}

@article{damani2025beyond,
  title={Beyond binary rewards: Training lms to reason about their uncertainty},
  author={Damani, Mehul and Puri, Isha and Slocum, Stewart and Shenfeld, Idan and Choshen, Leshem and Kim, Yoon and Andreas, Jacob},
  journal={arXiv preprint arXiv:2507.16806},
  year={2025}
}

@book{puterman1994markov,
author = {Puterman, Martin L.},
title = {Markov Decision Processes: Discrete Stochastic Dynamic Programming},
year = {1994},
isbn = {0471619779},
publisher = {John Wiley \& Sons, Inc.},
address = {USA},
edition = {1st},

}

@article{krizhevsky2009learning,
  title={Learning multiple layers of features from tiny images},
  author={Krizhevsky, Alex and Hinton, Geoffrey and others},
  year={2009},
  publisher={Toronto, ON, Canada}
}

@article{wei2021learning,
  title={Learning with Noisy Labels Revisited: A Study Using Real-World Human Annotations},
  author={Jiaheng Wei and Zhaowei Zhu and Hao Cheng and Tongliang Liu and Gang Niu and Yang Liu},
  journal={ArXiv},
  year={2021},
  volume={abs/2110.12088},
  url={https://api.semanticscholar.org/CorpusID:239768373}
}

@inproceedings{netzer2011reading,
  title={Reading digits in natural images with unsupervised feature learning},
  author={Netzer, Yuval and Wang, Tao and Coates, Adam and Bissacco, Alessandro and Wu, Baolin and Ng, Andrew Y and others},
  booktitle={NIPS workshop on deep learning and unsupervised feature learning},
  volume={2011},
  number={2},
  pages={4},
  year={2011},
  organization={Granada}
}

@misc{howard2020imagenette,
  author       = {Howard, Jeremy and Gugger, Sylvain},
  title        = {FastAI Imagenette and Imagewoof Datasets},
  year         = {2020},
  howpublished = {\url{https://github.com/fastai/imagenette}},
  note         = {Accessed: 2026-03-03}
}

@inproceedings{deng2009imagenet,
  title     = {ImageNet: A Large-Scale Hierarchical Image Database},
  author    = {Deng, Jia and Dong, Wei and Socher, Richard and Li, Li-Jia and Li, Kai and Fei-Fei, Li},
  booktitle = {CVPR},
  year      = {2009}
}

@article{ning2025learning,
  title={Learning When to Stop: Adaptive Latent Reasoning via Reinforcement Learning},
  author={Ning, Alex and Kuo, Yen-Ling and Gomes, Gabe},
  journal={arXiv preprint arXiv:2511.21581},
  year={2025}
}
\bibliographystyle{rlj}

\beginSupplementaryMaterials
\appendix
\section{Theoretical Supplement}
\label{sec:theory_supp}

\subsection{Regularity Conditions}
\label{sec:assumptions}
The results in this paper hold for any continuous simplex-valued policy satisfying the conditions below. The Dirichlet policy of Section~\ref{sec:arch} is one such instantiation.

\begin{enumerate}
    \item \textbf{Interior support.} The policy is supported on the interior of the simplex ($a_k > 0$ almost surely), so the log-score $\log a_y$ is finite with probability one.
    \item \textbf{Integrability.} $\mathbb{E}_\pi[\sum_{t=1}^{\infty} \gamma^{t-1} |\log a_{t,y}|] < \infty$, ensuring that the geometric mixture objective and all interchange-of-summation steps are valid.
\end{enumerate}

\paragraph{Verification for the Dirichlet policy.} For a $\mathrm{Dir}(c\mu)$ policy with $c$ clipped to $[c_{\min}, c_{\max}]$ and softmax outputs ensuring $\mu_k$ bounded away from zero, $\mathbb{E}_\pi[|\log a_y|^2]$ is finite at each step (via digamma/trigamma bounds on the Dirichlet log-moments). Since $N \sim \mathrm{Geom}(1-\gamma)$ has finite moments, the integrability condition follows. Other continuous simplex-valued families (e.g., logistic-normal) satisfy the same conditions under analogous parameter constraints.

\paragraph{Boundary behavior.} When $q(\cdot|x)$ assigns zero probability to some classes, the optimal action lies on the simplex boundary. The Dirichlet policy, which has interior support, approaches such boundary points as $c \to \infty$. Formal ``maximizer'' statements in the main text refer to the supremum of the expected objective.

\subsection{Proofs}
\label{sec:proofs}

\subsubsection{Jensen Gap for Stochastic Simplex Actions}
\label{sec:jensen_gap_proof}

We rewrite the expected log-score as a mean term minus a nonnegative penalty. We use the Bregman divergence associated with $F(x) = -\log x$, which is strictly convex on $(0, \infty)$. The Bregman divergence is
\[
D_F(x \| y) = F(x) - F(y) - F'(y)(x - y).
\]
Setting $x = a_y$ and $y = \mu_y$, with $F'(y) = -1/y$:
\begin{align}
D_F(a_y \| \mu_y) &= -\log a_y - (-\log \mu_y) - \left(-\frac{1}{\mu_y}\right)(a_y - \mu_y) \nonumber \\
&= -\log a_y + \log \mu_y + \frac{a_y - \mu_y}{\mu_y}.
\label{eq:bregman_explicit}
\end{align}
Rearranging for $\log a_y$:
\[
\log a_y = \log \mu_y + \frac{a_y - \mu_y}{\mu_y} - D_F(a_y \| \mu_y).
\]
This identity already shows the structure we need. We now take expectation under $a \sim \pi_\theta$ with $\mathbb{E}_\pi[a_y] = \mu_y$
\begin{align}
\mathbb{E}_\pi[\log a_y] &= \log \mu_y + \frac{\mathbb{E}_\pi[a_y] - \mu_y}{\mu_y} - \mathbb{E}_\pi[D_F(a_y \| \mu_y)] \nonumber \\
&= \log \mu_y - \mathbb{E}_\pi[D_F(a_y \| \mu_y)],
\label{eq:jensen_gap_proof}
\end{align}
where the linear correction vanishes because $\mathbb{E}_\pi[a_y] = \mu_y$.

Since $D_F(x \| y) = x/y - \log(x/y) - 1 \ge 0$ with equality only when $x=y$, the expected log-score is maximized when there is no stochastic spread around the mean. This is exactly the lemma statement.

\paragraph{Optional local approximation for intuition.}
This short derivation gives intuition and is not needed for any formal claim. Start from $F(a_y) = -\log a_y$ and expand around $a_y = \mu_y$.
\[
-\log a_y = -\log \mu_y + \frac{1}{\mu_y}(\mu_y - a_y) + \frac{1}{2\mu_y^2}(a_y - \mu_y)^2 + O\!\left(\frac{(a_y - \mu_y)^3}{\mu_y^3}\right).
\]
Substitute into the Bregman definition and cancel linear terms.
\[
D_F(a_y \| \mu_y) = \frac{(a_y - \mu_y)^2}{2\mu_y^2} + O\!\left(\frac{(a_y - \mu_y)^3}{\mu_y^3}\right).
\]
Take expectation.
\begin{equation}
\mathbb{E}_\pi[D_F(a_y \| \mu_y)] \approx \frac{\mathrm{Var}_\pi(a_y)}{2\mu_y^2}.
\label{eq:quad_penalty}
\end{equation}
Combining with the identity above gives
\[
\mathbb{E}_\pi[\log a_y] \;\approx\; \log \mu_y - \frac{\mathrm{Var}_\pi(a_y)}{2\mu_y^2}.
\]
The penalty is proportional to the squared coefficient of variation of the action. For a $\mathrm{Dir}(c\mu)$ policy, $\mathrm{Var}_\pi(a_y) = \mu_y(1-\mu_y)/(c+1)$, so the penalty scales as $O(1/c)$.

\subsubsection{Proof of Proposition~\ref{prop:target} (Optimal Policy Target)}

We give a self-contained proof that avoids any independence assumption between refinement steps. The argument has four parts.

\textbf{Step 1. Geometric stopping reduces the objective to terminal log-score.}
By Lemma~\ref{lem:geom_discount} and telescoping of the shaped reward (Eq.~\ref{eq:telescope}), the per-example objective equals $J(\pi) = \mathbb{E}_{N \sim \mathrm{Geom}(1-\gamma),\, a_{1:N} \sim \pi}[\log a_{N,y}] - \log a_{0,y}$, with $N$ independent of the trajectory. Since $a_0$ is constant, maximizing $J$ is equivalent to maximizing $\mathbb{E}_{N,\pi}[\log a_{N,y}]$.

\textbf{Step 2. Expand by terminal time.}
Since $\mathbb{P}(N = t) = (1-\gamma)\gamma^{t-1} > 0$ for all $t \ge 1$, the law of total expectation gives
\[
J(\pi) = \sum_{t=1}^{\infty} (1-\gamma)\gamma^{t-1}\,\mathbb{E}_{(x,y)\sim\mathcal{D}}\,\mathbb{E}_\pi[\log a_{t,y}].
\]
This decomposition uses only the law of total expectation over $N$ and requires no independence between steps.

\textbf{Step 3. KL identity bounds each per-step term.}
Fix any step $t$ and condition on $x$. For any action $a_t$ in the simplex interior (guaranteed by the interior-support condition), the KL identity gives
\[
\mathbb{E}_{y \sim q(\cdot|x)}[\log a_{t,y}] = -H(q(\cdot|x)) - \mathrm{KL}(q(\cdot|x) \| a_t).
\]
Taking the policy expectation over $a_t$,
\[
\mathbb{E}_\pi\!\big[\mathbb{E}_{y \sim q(\cdot|x)}[\log a_{t,y}] \mid x\big] = -H(q(\cdot|x)) - \mathbb{E}_\pi[\mathrm{KL}(q(\cdot|x) \| a_t) \mid x] \le -H(q(\cdot|x)),
\]
with equality if and only if $\mathrm{KL}(q(\cdot|x) \| a_t) = 0$ $\pi$-almost surely, i.e., $a_t = q(\cdot|x)$ a.s.
This single condition simultaneously pins the correct target and forces deterministic actions.

\textbf{Step 4. Positive weights force optimality at every step.}
The full objective is upper bounded by replacing each per-step term with its maximum $-H(q(\cdot|x))$. Under realizability, a policy that outputs $q(\cdot | x)$ deterministically at every step achieves this upper bound. Since every weight $(1-\gamma)\gamma^{t-1}$ is strictly positive, any optimal policy $\pi^*$ must attain the per-step maximum at every $t$, which requires $a_t^* = q(\cdot | x)$ a.s. The optimal value conditioned on input $x$ is
\[
\mathbb{E}_{y \sim q(\cdot|x)}[J_{x,y}(\pi^*)] = -H\!\big(q(\cdot | x)\big) - \log a_{0,y}.
\]
Since $a_0$ is uniform, this becomes $-H\!\big(q(\cdot | x)\big) + \log K$.

\subsubsection{Proof of Proposition~\ref{prop:finite_scale}}
Let $\mathcal{L}_t(W) = \mathbb{E}_{(x,y) \sim \mathcal{D}}[-\log\mathrm{softmax}(W\tau_t(x))_y]$ be the per step expected loss and $\mathcal{L}(W) = \sum_{t=1}^{\infty}(1-\gamma)\gamma^{t-1}\mathcal{L}_t(W)$ the geometric mixture. Equivalently, $\mathcal{L}(W) = \mathbb{E}_{N \sim \mathrm{Geom}(1-\gamma)}[\mathcal{L}_N(W)]$.
We proceed in three steps. First we remove softmax shift ambiguity. Next we show that large scaling is penalized at an early step where features are strongly non-separable. Finally we combine continuity and coercivity to prove existence of a finite minimizer.

\textbf{Step 1. Fixing the shift ambiguity.}
Since $\mathrm{softmax}(z)$ is invariant under $z \mapsto z + c\mathbf{1}$, adding the same row vector to all rows of $W$ leaves every loss value unchanged. We fix this by setting the last row of $W$ to zero. All norms below are Frobenius norms on this reduced $(K{-}1)\times d$ space.

\textbf{Step 2. Coercivity from early features that are strongly non-separable.}
Fix any nonzero direction $U$ in the reduced space and scale it as $\alpha U$ with $\alpha > 1$. At the early step $t_0$, define
\[
m_U(x,y) = \max_{k \neq y}[U\tau_{t_0}(x)]_k - [U\tau_{t_0}(x)]_y.
\]
The strong non-separability condition states that for every nonzero $U$ in the reduced space, we have $c_U := \mathbb{E}_{(x,y) \sim \mathcal{D}}[\max(0, m_U(x,y))] > 0$. Since $\|\tau_{t_0}(x)\| \le B$, this expected value is finite.

For every $(x,y)$,
\[
\mathrm{softmax}(\alpha U\tau_{t_0}(x))_y
\le \frac{e^{\alpha [U\tau_{t_0}(x)]_y}}{e^{\alpha \max_k [U\tau_{t_0}(x)]_k}}
= e^{-\alpha\,m_U(x,y)},
\]
hence $-\log\mathrm{softmax}(\alpha U\tau_{t_0}(x))_y \ge \max(0, \alpha m_U(x,y)) = \alpha \max(0, m_U(x,y))$ and
\[
\mathcal{L}_{t_0}(\alpha U) \ge \alpha\,c_U \xrightarrow[\alpha\to\infty]{} \infty.
\]
Because $(1-\gamma)\gamma^{t_0-1} > 0$,
\[
\mathcal{L}(\alpha U) \ge (1-\gamma)\gamma^{t_0-1}\,\mathcal{L}_{t_0}(\alpha U) \xrightarrow[\alpha\to\infty]{} \infty.
\]
Since this holds for every nonzero direction $U$, $\mathcal{L}$ is coercive on the reduced space.

\textbf{Step 3. Continuity and existence of a finite minimizer.}
Each $\mathcal{L}_t$ is continuous (smooth softmax composed with the map $\tau_t$). On any ball $\|W\|_F \le R$,
\[
\mathcal{L}_t(W) \le \log K + 2\|W\|_F B \le \log K + 2RB.
\]
Therefore
\[
\sup_{\|W\|_F \le R}\Big|\sum_{t>n}(1-\gamma)\gamma^{t-1}\mathcal{L}_t(W)\Big|
\le (\log K + 2RB)\gamma^n \to 0,
\]
so the geometric series is uniformly convergent on bounded sets and $\mathcal{L}$ is continuous.
Now let $S = \{W \mid \mathcal{L}(W) \le \mathcal{L}(0)\}$. Coercivity makes $S$ bounded, and continuity makes $S$ closed, so $S$ is compact in finite dimension. By the extreme value theorem, $\mathcal{L}$ attains its minimum at some $W^*$ with $\|W^*\|_F < \infty$.

\paragraph{Comparison.}
Single-step cross-entropy $\mathcal{L}_T(W)$ on separable data decreases monotonically toward $0$ as $\|W\| \to \infty$ but never attains its infimum at any finite $W$.

\section{Additional Experimental Results}
\label{sec:exp_supp}

\subsection{Hyperparameters}
\label{sec:hyperparameters}

A short overview of the hyperparameter sweep during training is shown in Table~\ref{tab:params}.
We tuned optimizer choice, activation function, and batch size over the options listed there: optimizer $\in \{\mathrm{SGD}, \mathrm{Adam}\}$, activation $\in \{\mathrm{ReLU}, \mathrm{SiLU}\}$, and batch size $\in \{128, 256\}$. For Adam, we used a learning rate of $3 \times 10^{-4}$, weight decay of $0.001$, and gradient clipping of $0.5$, while for SGD we used standard settings including momentum $0.9$. No learning rate scheduler was used for any training paradigm.
For the table, Supervised training paradigms were trained for $300$ epochs, RIC for $2000$ epochs, independent of the dataset. The number of passes per snapshot was fixed to $5$.
To make compute budgets comparable to our RIC baseline with discount $\gamma=0.8$, we matched both adaptive methods to the same geometric horizon: we set PonderNet's geometric prior to $\lambda_p = 1-\gamma = 0.2$ (implying $\mathbb{E}[N]=1/\lambda_p=5$) and used the standard KL weight $\beta=0.01$, while keeping $\texttt{max\_steps}=20$.
For ACT, we fixed the numerical/stability constants to paper defaults ($\epsilon=0.01$, positive halting-bias initialization) and set the ponder penalty to $\tau = -\ln(\gamma)=0.223$, which corresponds to a geometric continuation probability $\gamma$ under a linear compute cost.
These settings provide a principled, analytically grounded starting point for apples-to-apples compute comparisons, with any residual deviations in realized step counts arising from each method's learned halting dynamics.\\
The parameters of the Dirichlet policy are given by $\alpha = \mu \cdot c + \epsilon$, the concentration scale is clipped to $c \in [1, 10]$ with $\epsilon = 0.01$ added for numerical stability.\\
For low-resolution datasets such as CIFAR-10, we modify the encoder stem to better preserve spatial information, employing a $3 \times 3$ stride-1 convolution and removing the initial max-pooling layer to avoid early spatial downsampling. The original ResNet architecture is otherwise preserved, with the activation function selected from the sweep in Table~\ref{tab:params}. Encoding and thought space dimensions are set equal.\\
All experiments were conducted on NVIDIA L40 GPUs.

\begin{table}[b]
\centering
\caption{Hyperparameter sweep considered in our experiments.}
\label{tab:params}
\begin{tabular}{lll}
\toprule
Group & Hyperparameter & Value / Options \\
\midrule
Optimizer & Choice & \{SGD, Adam\} \\
Optimizer & Adam learning rate & $3 \times 10^{-4}$ \\
Optimizer & SGD settings & standard (e.g., momentum $0.9$) \\
Training & Batch size & \{128, 256\} \\
Model & Activation & \{ReLU, SiLU\} \\
\bottomrule
\end{tabular}
\end{table}

\subsection{Results}
This section holds supplementary evaluation material.
Figure~\ref{fig:lgeneralization} shows validation accuracy as a function of training accuracy for CIFAR-10 (a), SVHN (b), and ImageWoof (c). RIC generally exhibits better generalization, with the gap increasing on harder datasets.\\
Figure~\ref{fig:ece_svhn} and Figure~\ref{fig:ece_woof} show confidence distributions and reliability diagrams on SVHN (test).
Both models produce sharply peaked confidence distributions due to the high accuracy on SVHN. 
At high accuracies, the calibration advantage is reduced.
However, on ImageWoof, SL exhibits systematic overconfidence, whereas RIC predictions are more dispersed
and are underconfident on all bins, resulting in improved calibration.\\
Table~\ref{tab:results_table_new} shows this trend on all evaluated datasets. 
It reports test accuracy, train accuracy, and ECE with confidence intervals across CIFAR-10, SVHN, and ImageWoof.
RIC achieves competitive validation accuracy across datasets while consistently reducing calibration error compared to supervised baselines.\\
Figure~\ref{fig:abl} shows a small ablation study. The Dirichlet head improves stability and training speed. 
In addition, the SPO objective improves calibration slightly at the cost of training speed compared to PPO.

\begin{figure}[t]
    \centering
    \includegraphics[width=\textwidth]{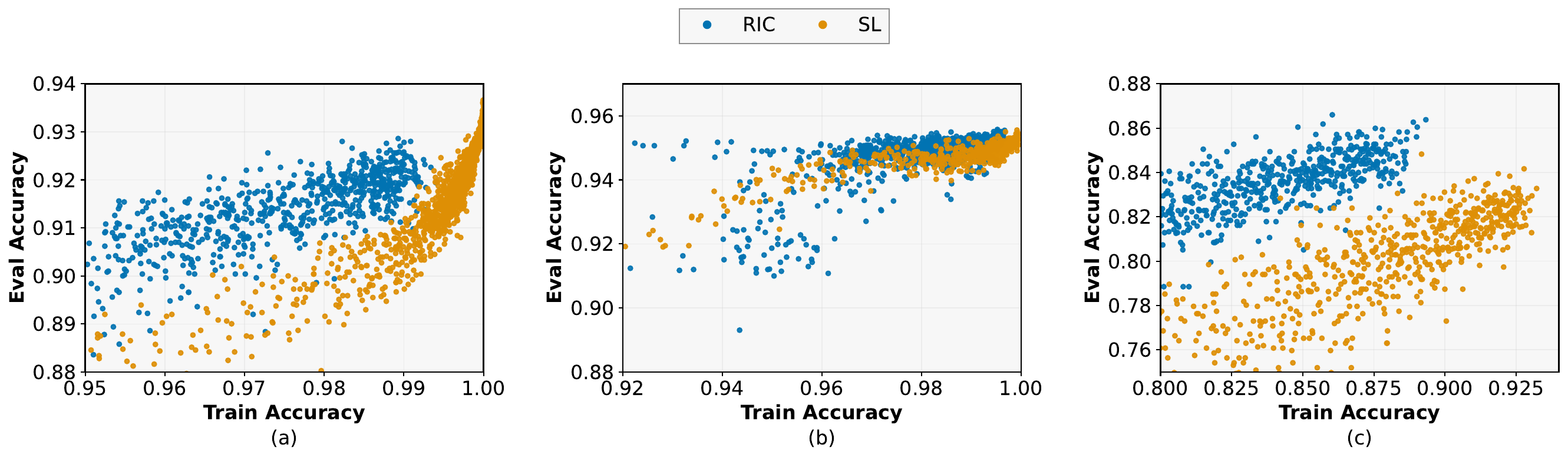}
    \caption{\textbf{Validation accuracy as a function of training accuracy.}
    (a) On CIFAR-10, (b) on SVHN, (c) on ImageWoof.
    RIC achieves slightly better generalization.}
    \label{fig:lgeneralization}
\end{figure}

\begin{figure}[t]
    \centering
    \includegraphics[width=\textwidth]{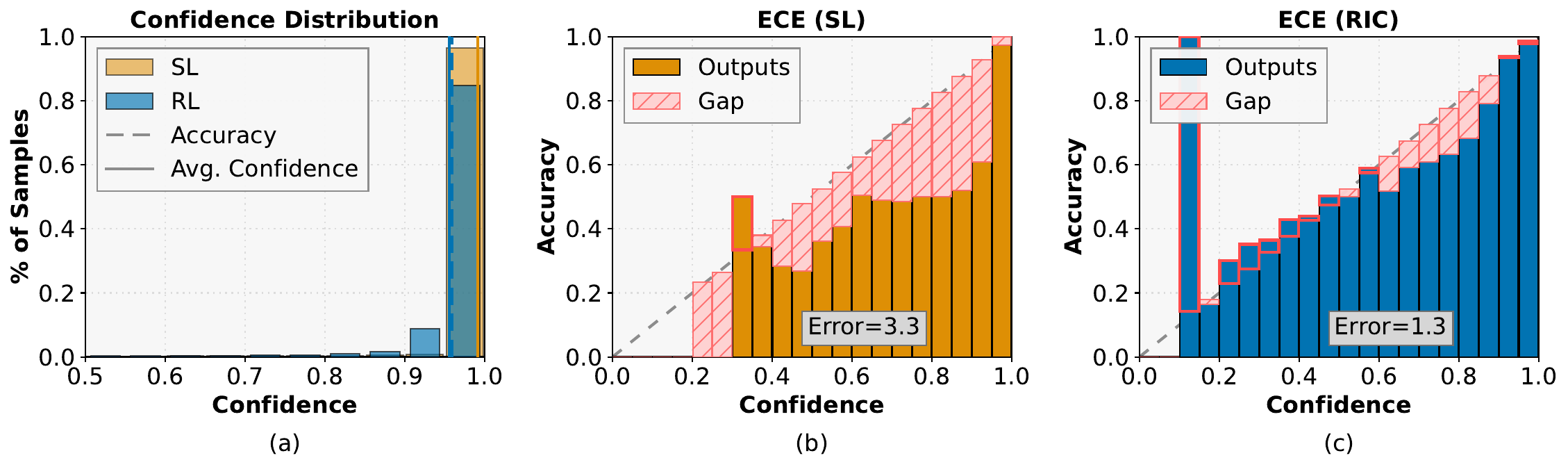}
    \caption{\textbf{Confidence distributions and reliability diagrams on SVHN (test).} 
    (a) Confidence histogram averaged across models trained with five random seeds. 
    (b) Reliability diagram for SL and RIC (c)}
    \label{fig:ece_svhn}
\end{figure}

\begin{figure}[t]
    \centering
    \includegraphics[width=\textwidth]{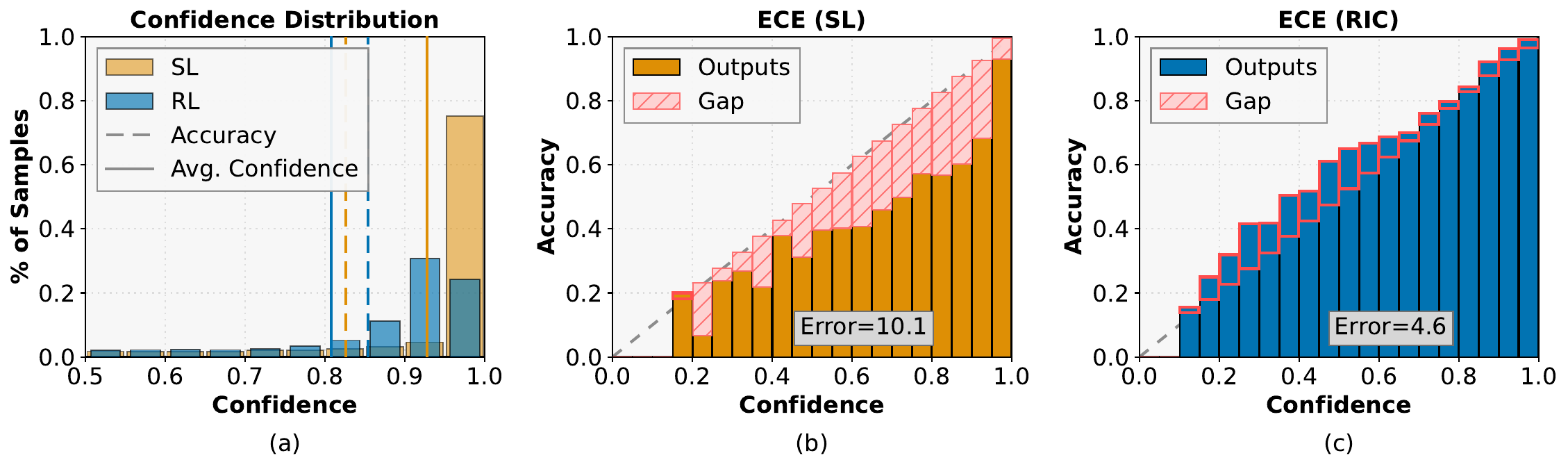}
    \caption{\textbf{Confidence distributions and reliability diagrams on ImageWoof (test).} 
    (a) Confidence histogram averaged across models trained with five random seeds. 
    (b) Reliability diagram for SL and RIC (c)}
    \label{fig:ece_woof}
\end{figure}


\begin{figure}[t]
    \centering
    \includegraphics[width=\textwidth]{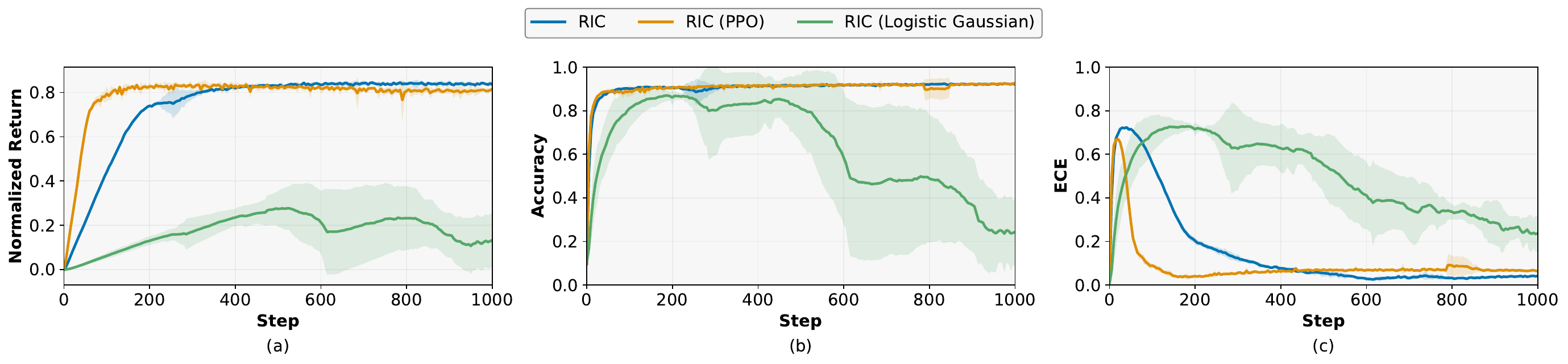}
    \caption{\textbf{Ablation study.}
    (a) normalized return, (b) validation accuracy and ECE (c) over training time for ablations. 
    Standard RIC (blue) with Dirichlet head outperforms Logistic Gaussian head in terms of stability and training speed. SPO achieves better calibration compared to PPO.}
    \label{fig:abl}
\end{figure}

\begin{table}[t]
\centering
\footnotesize
\setlength{\tabcolsep}{4pt}
\renewcommand{\arraystretch}{1.1}

\resizebox{\linewidth}{!}{
\begin{tabular}{l ccc ccc ccc}
\toprule
& \multicolumn{3}{c}{CIFAR-10} & \multicolumn{3}{c}{SVHN} & \multicolumn{3}{c}{ImageWoof} \\
\cmidrule(lr){2-4} \cmidrule(lr){5-7} \cmidrule(lr){8-10}

& $acc_{train}$ [\%] & $acc_{test}$ [\%] & $ece_{test}$
& $acc_{train}$ [\%] & $acc_{test}$ [\%] & $ece_{test}$
& $acc_{train}$ [\%] & $acc_{test}$ [\%] & $ece_{test}$ \\
\midrule

SL
& $99.83 \pm 0.02$  & $92.57 \pm 0.17$ & $0.059 \pm 0.002$
& $99.88 \pm 0.03$ & $95.80 \pm 0.04$ & $0.033 \pm 0.0001$
& $90.91 \pm 0.81$ & $82.61 \pm 0.50$ & $0.102 \pm 0.005$ \\

ACT
&  $99.78 \pm 0.10$ & $92.51 \pm 0.14$ & $0.059 \pm 0.002$
&  $99.80 \pm 0.02$ & $95.68 \pm 0.09$ & $0.033 \pm 0.0009$
& $91.90 \pm 0.57$ & $83.62 \pm 0.29$ & $0.097 \pm 0.003$ \\

PonderNet
&  $99.78 \pm 0.04$ & $92.43 \pm 0.23$ & $0.059 \pm 0.002$
&  $99.83 \pm 0.07$& $95.79 \pm 0.15$ & $0.033 \pm 0.0005$
& $91.52 \pm 0.48$ & $83.68 \pm 0.22$ & $0.097 \pm 0.004$ \\

\textbf{RIC (ours)}
&  $99.33 \pm 0.33$ & $\mathbf{93.05 \pm 0.07}$ & $\mathbf{0.035 \pm 0.004}$
&  $99.40 \pm 0.20$ & $\mathbf{95.86 \pm 0.20}$ & $\mathbf{0.018 \pm 0.006}$
& $86.37 \pm 0.55$ & $\mathbf{85.39 \pm 0.28}$ & $\mathbf{0.049 \pm 0.010}$ \\

\bottomrule
\end{tabular}
}

\caption{\textbf{Test accuracy, train accuracy, and ECE with confidence intervals across CIFAR-10, SVHN, and ImageWoof.}
RIC achieves competitive validation accuracy across datasets while consistently improving calibration.
}

\label{tab:results_table_new}

\end{table}

\end{document}